\documentclass{article}

\PassOptionsToPackage{numbers, compress}{natbib}

\usepackage[preprint]{neurips_2026}

\usepackage[utf8]{inputenc} % allow utf-8 input
\usepackage[T1]{fontenc}    % use 8-bit T1 fonts
\usepackage{hyperref}       % hyperlinks
\usepackage{url}            % simple URL typesetting
\usepackage{booktabs}       % professional-quality tables
\usepackage{amsfonts}       % blackboard math symbols
\usepackage{nicefrac}       % compact symbols for 1/2, etc.
\usepackage{microtype}      % microtypography        
\usepackage{amsmath}
\usepackage{amssymb}
\usepackage{amsthm}
\usepackage{graphicx}
\usepackage{subcaption}
\usepackage{multirow} 
\usepackage{makecell}
\usepackage{array}
\usepackage[table,dvipsnames]{xcolor}
\usepackage{capt-of}
\usepackage{enumitem}
\usepackage{xspace}
\usepackage{algorithm}
\usepackage{algpseudocode}
\usepackage{etoc}
\usepackage{longtable}
\usepackage{pifont}
\usepackage{ulem}
\usepackage{colortbl}
\usepackage{wrapfig}   
\newtheorem{theorem}{Theorem}[section]

\newtheorem{remark}{Remark}[section]

\newcommand{\pmstd}[2]{$#1\pm#2$}

\definecolor{OrangeVar}{RGB}{230,120,20}
\definecolor{BlueVar}{RGB}{30,100,200}

\newcommand{\ovar}[1]{{\color{OrangeVar}\boldsymbol{#1}}}
\newcommand{\bvar}[1]{{\color{BlueVar}\boldsymbol{#1}}}
\newcommand{\UniSTOK}{\textsc{UniStok}\xspace}

\definecolor{impcolor}{RGB}{180, 0, 0}
\definecolor{basegray}{RGB}{130, 130, 130}
\newcommand{\imp}[1]{{\color{impcolor}\textsuperscript{\scriptsize$\downarrow$#1}}}
\newcommand{\basenum}[1]{{\textcolor{basegray}{#1}}}

\newcommand{\pmstdbest}[2]{{\color{red}\textbf{#1}{\scriptsize$\pm$\textbf{#2}}}}

\title{Uniform Inductive Spatio-Temporal Kriging}

\author{
Lewei Xie$^{*}$ \\
City University of Hong Kong (Dongguan) \\
\texttt{72404204@cityu-dg.edu.cn}
\And
Haoyu Zhang$^{*}$ \\
City University of Hong Kong \\
\texttt{hzhang2838-c@my.cityu.edu.hk}
\And
Yulong Chen$^{*}$ \\
City University of Hong Kong (Dongguan) \\
\texttt{72405526@cityu-dg.edu.cn}
\And
Liangjun You \\
City University of Hong Kong (Dongguan) \\
\texttt{72540091@cityu-dg.edu.cn}
\And
Zongxian Yang \\
City University of Hong Kong (Dongguan) \\
\texttt{zongxian.yang@cityu-dg.edu.cn}
\And
Yifan Zhang \\
City University of Hong Kong (Dongguan) \\
\texttt{yif.zhang@cityu-dg.edu.cn}
}

\begin{document}

\maketitle
\begingroup
\renewcommand\thefootnote{\fnsymbol{footnote}}
\footnotetext[1]{Equal contribution.}
\endgroup

\begin{abstract}
Inductive spatio-temporal kriging infers signals at unobserved locations from observed sensors, but real-world observations are often incomplete and exhibit block-wise missingness caused by failures, interruptions, or maintenance.
A common impute-then-krige pipeline suffers from objective mismatch: better reconstruction on observed sensors does not necessarily improve downstream kriging, and value-dependent imputation bias can be propagated to unobserved nodes.
We propose \UniSTOK, a plug-and-play framework for inductive spatio-temporal kriging under incomplete observations.
We first introduce Reliability-guided Signal Regulation (RSR), which estimates entry-wise reliability from temporal continuity and spatial support, and uses it to regulate the input signals so that reliable observations are emphasized while long-gap or weakly supported entries are suppressed before spatial propagation.
We further introduce Residual Bias Calibration (RBC), which estimates value-conditioned residual prototypes after the main predictor converges and learns context-correction amplitudes to adaptively calibrate systematic over- or under-estimation in final kriging predictions.
Extensive experiments on real-world datasets show that \UniSTOK consistently improves multiple kriging backbones. 
\end{abstract}

\etocdepthtag.toc{main}
\section{Introduction}

Spatio-temporal applications rely on sensor deployments to collect raw observations for downstream tasks such as intelligent transportation \cite{shao2022pre, shao2022decoupled}, environmental monitoring \cite{ferchichi2025deep}, and urban analytics \cite{jin2023spatio, jiang2022graph}.  
In practice, however, the capital and maintenance costs of large-scale deployments \cite{wang2022inferring} make it infeasible to instrument every location with sensors.
With the recent development of Graph Neural Networks (GNNs) \cite{kipf2016semi}, inductive spatio-temporal kriging (ISK) \cite{appleby2020kriging, wu2021inductive} has been proposed as a promising paradigm for inferring signals at unobserved locations. 
It models the sensing system as a graph, using randomly sampled subgraphs for scalable learning.  

However, observed sensors in real-world deployments often provide incomplete observations because of equipment failures, communication interruptions, or maintenance \cite{li2020spatiotemporal}.
Such missingness can be substantial; for example, some loop sensors in Beijing's intelligent transportation system exhibit missing rates of 20\%--25\% \cite{qu2009ppca}.

This issue becomes more pronounced under structured missingness. 
Unlike randomly scattered missing values, block-wise missing patterns remove temporally contiguous observations, 
thereby disrupting local trends and temporal continuity that are difficult to recover from neighboring observations alone~\cite{du2024tsi}.
In ISK, this distinction becomes even more critical. 
Our preliminary analysis confirms this effect: under the same missing rate, block-wise missingness consistently causes larger kriging performance degradation than random missingness, as detailed in Appendix~\ref{app:block_missing_analysis}. 
Motivated by this observation, we study a practical yet underexplored setting: \textbf{\textit{spatio-temporal kriging under block-wise missingness on observed sensors.}}

A straightforward remedy is to complete the observed-sensor sequences via imputation before feeding them into ISK models.
However, this two-stage impute-then-krige pipeline suffers from an objective mismatch.
Recent studies have shown that lower reconstruction error does not necessarily translate into better downstream task performance \cite{wang2024task}.
Our motivating experiments reveal the same issue in ISK: this intuitive two-stage solution is unreliable.
Specifically, we observe weak or even negative correlations between imputation accuracy and downstream kriging performance, as detailed in Appendix~\ref{app:imputation_kriging_mismatch}.
These observations indicate that optimizing block-wise imputation as an isolated preprocessing task is not a reliable surrogate for improving downstream kriging.

A more fundamental issue is that imputation exhibit value-dependent residual bias.
Due to the imbalanced distribution of real-world sensor measurements, frequent regimes dominate the optimization objective, whereas peaks, valleys, and abrupt transitions are underrepresented.
As a result, imputation models tend to produce conservative estimates, leading to overestimation in low-value ranges and underestimation in high-value ranges.
Our experiments confirm this pattern: although a representative imputation model has a small global mean bias ($+1.27$), it still overestimates low-value regions ($+6.32$) and underestimates high-value regions ($-1.53$).
After kriging, this bias can be further amplified.
For example, with an ISK model, the low-value conditional bias increases from $5.35$ in the no-missing reference to $[11.57,12.20]$ under block-wise incomplete observations, while the high-value bias changes from $-1.85$ to $[-3.72,-3.42]$.
This suggests that the residual bias introduced during imputation is not isolated at the reconstruction stage; instead, it can be inherited and further reflected in downstream kriging predictions. More related results are provided in Appendix~\ref{app:bias_diagnosis}.
\textbf{\textit{Therefore, the key challenge is to calibrate value-dependent residual bias in kriging under incomplete observations.}} Figure~\ref{fig:intro_overview} summarizes the problem setting, the core challenges, and the design principle of our framework.

\begin{figure*}[t]
    \centering
    \includegraphics[width=\textwidth]{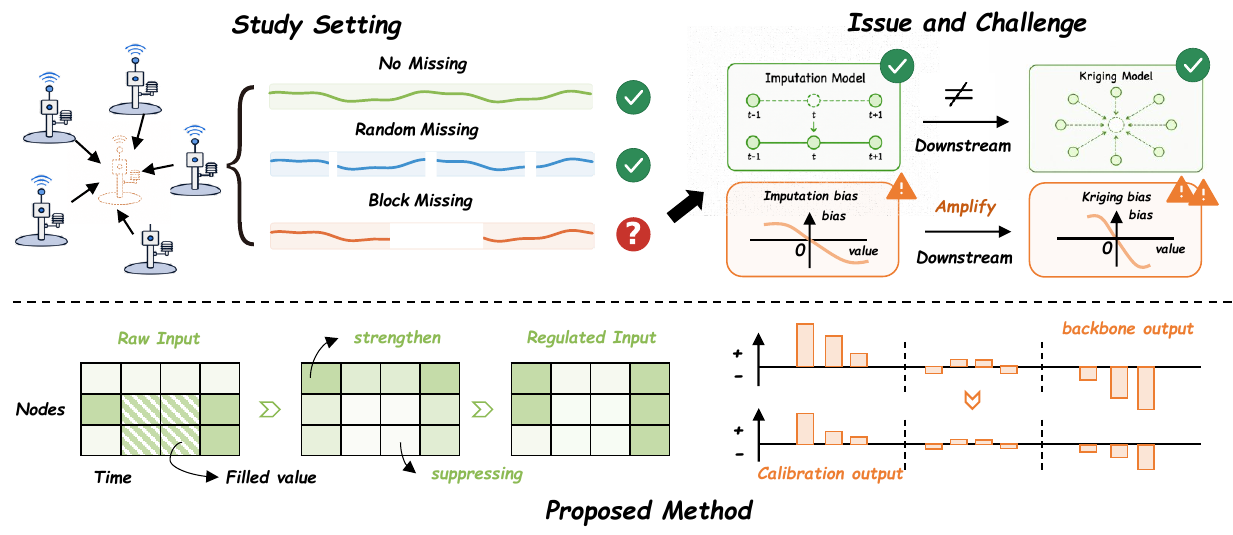}
    \caption{
    Overview of the studied setting, key challenge, and proposed solution.
    }
    \label{fig:intro_overview}
\end{figure*}

To address this challenge, we propose a framework with two complementary mechanisms.
\ding{182} \textbf{\textit{First}}, we suppress unreliable information propagation before kriging.
Directly correcting imputed entries from observed sensors is insufficient, because imputation-induced residual bias can be transformed and propagated by the kriging backbone through spatial aggregation and representation learning.
We therefore introduce \textit{Reliability-guided Signal Regulation} (RSR), which estimates entry-wise reliability from temporal continuity and spatial support and constructs a reliability-aware regulated input.
By enhancing reliable observations and suppressing weakly supported entries, RSR reduces the propagation of biased imputed values to unobserved nodes.
\ding{183} \textbf{\textit{Second}}, we calibrate the final kriging predictions with \textit{Residual Bias Calibration} (RBC).
Since value-dependent residual bias may still remain after RSR, RBC estimates value-conditioned residual prototypes from observed regions after the main predictor converges, where the main predictor consists of RSR and the shared kriging backbone.
RBC learns an adaptive correction amplitude for each prediction and selectively transfers residual prototypes to the infer of unobserved nodes.
This learnable calibration enables fine-grained residual adjustment and corrects overestimation or underestimation in the final outputs.

The main contributions of this paper are summarized as follows:

\begin{itemize} [leftmargin=*]
    \item We formulate spatio-temporal kriging under block-wise incomplete observations and propose \UniSTOK, a plug-and-play framework that enhances existing kriging backbones.

    \item We design \textit{Reliability-guided Signal Regulation} (RSR) to estimate temporal and spatial reliability and regulate incomplete observations, suppressing unreliable information propagation.

    \item We develop \textit{Residual Bias Calibration} (RBC), which learns value-conditioned residual prototypes and context-aware correction amplitudes to adaptively calibrate bias.

    \item We conduct extensive experiments on four real-world datasets, namely METR-LA, PEMS-BAY, NREL-AL, and NZ-Highway, showing that \UniSTOK consistently improves multiple state-of-the-art kriging backbones with lightweight plug-in overhead.
\end{itemize}

\section{Related Works}

Spatio-temporal data imputation aims to recover missing values in 
spatio-temporal sequences and has been widely used in traffic and 
environmental monitoring~\cite{yuan2022stgan}. Early statistical methods, 
including mean imputation, spline interpolation~\cite{mckinley1998cubic}, 
and regression~\cite{saad2020machine}, provide simple reconstruction 
strategies but are limited in capturing nonlinear temporal evolution and 
complex spatial dependencies. Recently, graph neural networks (GNNs) have 
become a major paradigm for modeling non-Euclidean spatial structures and 
temporal dynamics~\cite{wu2020comprehensive, zhang2025strapspatiotemporalpatternretrieval}. 
From the task perspective, GNN-based imputation can be broadly divided into 
in-sample and out-of-sample methods~\cite{wu2021inductive, jin2024survey}.

\textbf{\textit{In-Sample Imputation}}.
In-sample imputation focuses on recovering missing entries within the 
observed sensor set and the observed time range. Representative methods 
usually combine graph-based spatial modeling with temporal sequence 
modeling. For example, GACN~\cite{ye2021spatial} integrates Graph Attention 
Networks (GAT)~\cite{velivckovic2017graph} with temporal convolutions to 
capture spatial interactions and local temporal patterns. 
GRIN~\cite{cini2021filling} employs a spatio-temporal encoder based on 
Message Passing Neural Networks (MPNN)~\cite{gilmer2017neural} and Gated 
Recurrent Units (GRU)~\cite{chung2014empirical} through a two-stage 
imputation process. Bidirectional models such as 
DGCRIN~\cite{kong2023dynamic}, GARNN~\cite{shen2023bidirectional}, and 
MDGCN~\cite{liang2022memory} further exploit information from both temporal 
directions, while differing in their intermediate propagation and refinement 
workflows. More recently, STAMImputer~\cite{wang2025stamimputer} adopts a 
Mixture-of-Experts (MoE) framework with low-rank guided graph attention to 
dynamically generate spatial graphs. Despite their effectiveness, these 
methods mainly assume that the target nodes are already observed during 
training and inference, making them less suitable for imputing completely 
unobserved locations.

\textbf{\textit{Out-of-Sample Imputation}}.
Out-of-sample imputation, also known as spatio-temporal kriging, aims to 
infer values at unobserved locations from available observations. Unlike 
in-sample imputation, this setting requires the model to generalize beyond 
the observed sensor set, making spatial inductive capability essential. 
Early studies formulate the problem through graph-regularized tensor and 
matrix completion, which has been widely explored in spatio-temporal 
applications~\cite{bruna2013spectral, deng2016latent}. Recent work has 
shifted toward inductive spatio-temporal kriging (ISK) based on graph neural 
networks. KCN~\cite{appleby2020kriging} introduces graph convolution into 
kriging, while IGNNK~\cite{wu2021inductive} formulates kriging as graph 
signal reconstruction and enables inductive generalization across different 
observed and unobserved node subsets.
More recent ISK methods further improve generalization, virtual-node 
representation, and robustness to sparse graphs. KITS~\cite{xu2025kits} 
reduces the training--inference graph gap by incrementally inserting virtual 
nodes, pairing them with similar observed nodes, and constructing pseudo 
labels for virtual-node learning. STA-GANN~\cite{li2025sta} enhances 
generalizable spatio-temporal kriging with timestamp shifts, 
metadata-aware dynamic graph modeling, and adversarial learning for unseen 
sensors. DarkFarseer~\cite{liang2025darkfarseer} improves virtual-node 
representation under sparse and noisy graphs via hidden style enhancement, 
regional contrastive association, and graph denoising.

\section{Preliminaries}

\textbf{\textit{Spatio-temporal Kriging}}.
Let $\mathcal{G}=(\mathcal{V},\mathcal{E})$ denote a sensor graph with node set $\mathcal{V}$ and edge set $\mathcal{E}$.
We partition the nodes into observed and unobserved sets, $\mathcal{V}=\mathcal{V}_o\cup\mathcal{V}_u$ and $\mathcal{V}_o\cap\mathcal{V}_u=\emptyset$, where $|\mathcal{V}_o|=N$ and $|\mathcal{V}_u|=U$ are the numbers of observed and unobserved locations, respectively.
For a single feature channel, the measurements at observed locations over $T$ historical time steps are denoted by $\mathcal{X} \in \mathbb{R}^{N \times T}$, and the target signals at unobserved locations are denoted by $\mathcal{Y} \in \mathbb{R}^{U \times T}$.
The spatial structure of the entire graph is encoded by a weighted adjacency matrix $\mathcal{A} \in [0,1]^{(N+U)\times(N+U)}$, constructed using a thresholded Gaussian kernel over geographic distances~\cite{li2017diffusion}.
The goal of spatio-temporal kriging is to estimate $\mathcal{Y}$ from the observed measurements $\mathcal{X}$ and the graph structure $\mathcal{A}$.

\textbf{\textit{Inductive Spatio-temporal Kriging}}.
Each subgraph sample contains $n$ observed nodes selected from $\mathcal{V}_o$ and $u$ unobserved nodes selected from $\mathcal{V}_u$, where $n \leq N$ and $u \leq U$.
For a temporal window of length $t$, a subgraph sample is represented by $(X, A)$, where $X \in \mathbb{R}^{(n+u)\times t}$ denotes the windowed input matrix with unobserved-node entries masked, and $A \in [0,1]^{(n+u)\times(n+u)}$ is the corresponding subgraph adjacency matrix induced from $\mathcal{A}$.
Let $\mathcal{F}$ denote the kriging model, with output $\hat{Y}\in\mathbb{R}^{u\times t}$ on unobserved nodes; if all nodes are predicted, we select the unobserved rows.

\section{Methodology}

Figure~\ref{fig:framework} illustrates the overall framework of \UniSTOK.
Given incomplete observations, the observation mask, and the graph structure, \UniSTOK first applies Reliability-guided Signal Regulation (RSR) to construct a reliability-aware input view by estimating entry-wise reliability from temporal continuity and spatial support.
As a plug-and-play framework, \UniSTOK treats any kriging backbone as a shared predictor $F(\cdot)$ without modifying its internal architecture.
The original input and the RSR-regulated input are fed into the same backbone to produce a base prediction and a reliability-regulated prediction, which are adaptively fused by an entry-wise reliability-guided gate to obtain the main kriging output $\hat{Y}$.
Residual Bias Calibration (RBC) is then applied after the main predictor has converged.
Specifically, the main predictor, including RSR, the shared backbone, and the fusion, is fixed.
RBC then estimates value-dependent residual prototypes and learns context-aware correction amplitudes to refine $\hat{Y}$.
In this way, RBC calibrates value-dependent bias as a separate post-hoc module, while leaving the backbone unchanged and preserving the plug-and-play design.

\begin{figure*}[t]
    \centering
    \includegraphics[width=\textwidth]{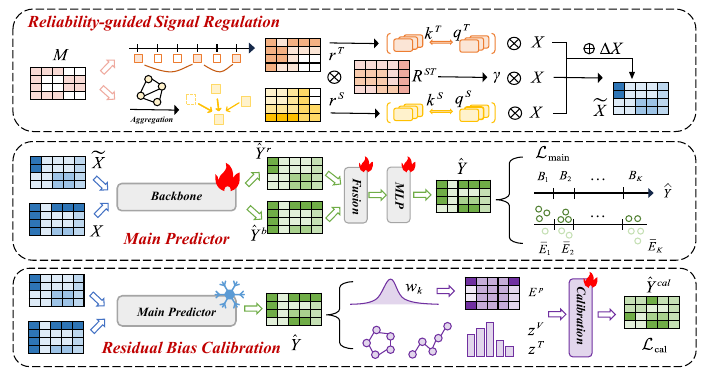}
    \caption{Overview of \UniSTOK, which consists of three components: Reliability-guided Signal Regulation (RSR), the shared-backbone main predictor, and Residual Bias Calibration (RBC).}
    \label{fig:framework}
\end{figure*}

\subsection{Reliability-guided Signal Regulation}

In spatio-temporal kriging, the input usually contains both real observations and imputed values for missing entries.
However, imputed values with long temporal gaps or weak spatial support may introduce unreliable patterns and bias subsequent spatio-temporal propagation.
To mitigate this issue, we propose \textit{Reliability-guided Signal Regulation} (RSR), which estimates entry-wise reliability over the input window and uses it to strengthen trustworthy signals while suppressing unreliable ones before they are propagated by the kriging backbone.

\textbf{\textit{Spatio-temporal reliability.}}
Let $M\in\{0,1\}^{(n+u)\times t}$ denote the observation mask, where $M_{i,\tau}=1$ if the value of node $i$ at time $\tau$ is available and $M_{i,\tau}=0$ otherwise.
RSR estimates entry-wise reliability by jointly considering temporal continuity and spatial support.
\ding{182} For temporal continuity, we measure the temporal distance from each entry to its nearest available observation on the same node.
Let $\delta_{i,\tau}$ denote the bidirectional nearest-observation distance of node $i$ at time $\tau$, computed as the minimum temporal distance to the nearest available entry before or after $\tau$.
The temporal reliability is defined as
$
r^{T}_{i,\tau}
=
1-
\frac{\log(1+\delta_{i,\tau})}{\log(1+t)} .
$
\ding{183} For spatial support, we measure whether an entry is supported by observed neighboring nodes at the same time step:
$
r^{S}_{i,\tau}
=
\frac{
\sum_{j=1}^{n+u} A_{ij}M_{j,\tau}
}{
\sum_{j=1}^{n+u} A_{ij}+\epsilon
}.
$
We combine the two sources into a spatio-temporal reliability:
$
R_{i,\tau}
=
\sqrt{r^{T}_{i,\tau}r^{S}_{i,\tau}+\epsilon}.
$
The following theorem establishes that the reliability $R$ is a well-defined, bounded, and monotone measure of observation quality, supporting its use as a signal regulation coefficient (see Appendix~\ref{app:proof_reliability} for the full proof).

\begin{theorem}[Boundedness and Monotonicity of the Reliability]
\label{thm:reliability_bound}
Let $R_{i,\tau}=\sqrt{r^{T}_{i,\tau}\cdot r^{S}_{i,\tau}+\epsilon}$ where $r^{T}_{i,\tau}\in[0,1]$ and $r^{S}_{i,\tau}\in[0,1]$. Then:
\begin{enumerate}
    \item (Boundedness) $\sqrt{\epsilon}\leq R_{i,\tau}\leq \sqrt{1+\epsilon}$ for all $i,\tau$.
    \item (Monotonicity) $R_{i,\tau}$ is strictly increasing in both $r^{T}_{i,\tau}$ and $r^{S}_{i,\tau}$.
    \item (Sensitivity decay) Under block missingness of length $L$ centered at $\tau$, the temporal reliability satisfies $r^{T}_{i,\tau}\leq 1-\frac{\log(1+\lfloor L/2\rfloor)}{\log(1+t)}$, which vanishes as $L\to t$.
\end{enumerate}
\end{theorem}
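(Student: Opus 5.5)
The plan is to treat $R$ as the composition of the map $(a,b)\mapsto ab+\epsilon$ on $[0,1]^2$ with the square root. Each of the three claims then reduces to an elementary fact about that map or about the definition of $r^{T}$. Throughout, $\epsilon>0$ is the fixed stabilizing constant.

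\emph{Boundedness.} For $a=r^{T}_{i,\tau}\in[0,1]$ and $b=r^{S}_{i,\tau}\in[0,1]$ we have $0\le ab\le 1$. Hence $\epsilon\le ab+\epsilon\le 1+\epsilon$. Since $\sqrt{\cdot}$ is increasing on $[0,\infty)$, this gives $\sqrt{\epsilon}\le R_{i,\tau}\le\sqrt{1+\epsilon}$. The argument is sign-safe because the radicand is at least $\epsilon>0$. I will also remark that the hypotheses $r^{T},r^{S}\in[0,1]$ do hold for the definitions in the text:
\begin{itemize}
\item $r^{S}$ is a ratio of a nonnegative sum to a larger denominator, because $A_{ij}\ge 0$ and $M_{j,\tau}\le 1$.
\item $r^{T}\in[0,1]$ holds whenever $0\le\delta_{i,\tau}\le t$, which is the case in any window of length $t$ containing at least one observation.
\end{itemize}

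\emph{Monotonicity.} I will differentiate on the interior:
\begin{equation*}
\frac{\partial R}{\partial r^{T}} = \frac{r^{S}}{2\sqrt{r^{T}r^{S}+\epsilon}},\qquad \frac{\partial R}{\partial r^{S}} = \frac{r^{T}}{2\sqrt{r^{T}r^{S}+\epsilon}}.
\end{equation*}
These partials are strictly positive exactly when the other factor is positive. This is the one real subtlety. Strictness fails on the edge $r^{S}=0$, where $R\equiv\sqrt{\epsilon}$ regardless of $r^{T}$, and symmetrically on the edge $r^{T}=0$. I therefore plan to state the monotonicity precisely:
\begin{itemize}
\item $R$ is nondecreasing in each argument everywhere;
\item $R$ is strictly increasing in $r^{T}$ whenever $r^{S}>0$, and strictly increasing in $r^{S}$ whenever $r^{T}>0$.
\end{itemize}
Equivalently, I will phrase it as a direct inequality: $a<a'$ and $b>0$ imply $ab+\epsilon<a'b+\epsilon$, so $R$ strictly increases. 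This refinement is the main obstacle, since the claim as literally stated needs this nondegeneracy caveat. I would add the caveat rather than try to prove an unconditional statement.

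\emph{Sensitivity decay.} Suppose $\tau$ lies at the center of a missing block of length $L$ on node $i$, meaning all entries within distance $\lfloor L/2\rfloor$ on both sides are missing. I will first argue that the nearest available observation is at distance at least $\lfloor L/2\rfloor$, so $\delta_{i,\tau}\ge\lfloor L/2\rfloor$. This depends on the indexing convention: I will count positions so that the central entry is separated from each block boundary by at least $\lfloor L/2\rfloor$ steps. The function $x\mapsto 1-\log(1+x)/\log(1+t)$ is decreasing, so the bound follows immediately. As $L\to t$, the right-hand side tends to
\begin{equation*}
1-\frac{\log(1+\lfloor t/2\rfloor)}{\log(1+t)}.
\end{equation*}
Strictly this is not zero for finite $t$. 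So I will interpret "vanishes" asymptotically: the ratio $\log(1+\lfloor t/2\rfloor)/\log(1+t)$ tends to $1$ as $t\to\infty$. Alternatively, if the block extends to the window boundary, then $\delta$ approaches $t$ and $r^{T}\to 0$ exactly. I will note both readings and present the second as the intended degenerate case.
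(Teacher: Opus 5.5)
Your argument follows the paper's proof essentially step for step. The boundedness step pushes $0\le r^{T}r^{S}\le 1$ through the monotone square root. The monotonicity step uses the partials $r^{S}/(2R)$ and $r^{T}/(2R)$, with the same nondegeneracy caveat the paper itself concedes: strictness in $r^{T}$ holds only when $r^{S}>0$. The decay step combines $\delta_{i,\tau}\ge\lfloor L/2\rfloor$ with the decreasing map $x\mapsto 1-\log(1+x)/\log(1+t)$, and your first reading of ``vanishes'' is exactly the paper's: $1-\log(1+\lfloor t/2\rfloor)/\log(1+t)\approx\log 2/\log t\to 0$ as $t\to\infty$.

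\textbf{The ``exact zero'' reading.} This is the one thing to fix. Your bound $\delta_{i,\tau}\le t$ is looser than what the paper uses, namely $\delta_{i,\tau}\le t-1$, since two positions in a length-$t$ window are at most $t-1$ apart. Whenever the node has any observation in the window, this gives $r^{T}_{i,\tau}\ge\log(1+1/t)/\log(1+t)>0$.

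Two consequences follow. First, for the actual construction, strictness in $r^{S}$ is unconditional, and only the $r^{S}>0$ caveat is genuinely needed. Second, your second reading of the decay claim fails. A block running to the window boundary still leaves $r^{T}\ge\log(1+1/t)/\log(1+t)$, not $0$. Getting $r^{T}=0$ would require the node to have no observation in the window at all. Then $\delta$ is not defined by the nearest-observation rule, only by an implementation convention, and you are no longer in the centered-block situation the claim describes. So present the asymptotic reading as the intended one, as the paper does, rather than the ``degenerate case.''

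A minor point: for $\delta_{i,\tau}\ge\lfloor L/2\rfloor$ you only need the entries at distance strictly less than $\lfloor L/2\rfloor$ from $\tau$ to be missing. This holds for a length-$L$ block centered at $\tau$ for either parity of $L$. Your phrasing ``within distance $\lfloor L/2\rfloor$'' asks for $2\lfloor L/2\rfloor+1$ missing entries, which is one too many when $L$ is even.
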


This theorem shows that entries deep inside long missing blocks receive low reliability, while entries near available observations retain high reliability.
The bounded range helps keep the downstream regulation coefficient $\gamma_{i,\tau}$ numerically stable.

\textbf{\textit{Attention residual.}}
The reliability $R$ explicitly describes observation quality, but it remains a compressed quality indicator rather than a complete representation. 
Therefore, RSR introduces a reliability-guided attention residual to extract implicit contextual correction signals. The residual extracts contextual correction signals from informative temporal and spatial sources.

Specifically, we derive temporal queries $q^{T}_{\tau}$ and spatial queries $q^{S}_{i}$ from aggregated temporal and spatial reliability summaries, respectively, and derive temporal keys $k^{T}_{\tau}$ and spatial keys $k^{S}_{i}$ from the availability mask $M$.
The queries indicate where reliability-guided correction is needed, while the keys indicate which temporal positions and nodes can provide reliable contextual information.
The temporal and spatial reliability-aware affinity scores are computed as
$
S^{T}_{\tau,\tau'}
=
q^{T}_{\tau}k^{T}_{\tau'},
S^{S}_{ij}
=
q^{S}_{i}k^{S}_{j}.
$
They quantify which observed contexts are more compatible with each target node-time position, and are used to extract residual evidence for signal regulation.
The temporal and spatial residual features are then aggregated as
$
X^{T}_{i,\tau}
=
\sum_{\tau'=1}^{t}
S^{T}_{\tau,\tau'}X_{i,\tau'},
X^{S}_{i,\tau}
=
\sum_{j=1}^{n+u}
S^{S}_{ij}X_{j,\tau},
$
Finally, the residual correction is obtained by.
\begin{equation}
\Delta X_{i,\tau}
=
\eta
\cdot
\tanh
\left(
\Phi_{\Delta}
\left(
[X^{T}_{i,\tau};X^{S}_{i,\tau}]
\right)
\right),
\end{equation}
where $\Phi_{\Delta}(\cdot)$ is a lightweight fusion network and $\eta$ controls the residual magnitude. 
The bounded residual prevents over-correction and makes $\Delta X$ act as a conservative refinement to the original signal rather than a dominant transformation.

\textbf{\textit{Signal regulation.}}
After obtaining the reliability and the attention residual, RSR constructs a regulated input by jointly applying explicit reliability and contextual residual refinement. 
The goal is to amplify entries with strong spatio-temporal evidence and attenuate entries that are likely dominated by missingness-induced artifacts.
We map the reliability to a scaling coefficient:
\begin{equation}
\gamma_{i,\tau}
=
1+
\alpha 
\cdot
\tanh
\left(
\frac{R_{i,\tau}-\mu_R}{s_R+\epsilon}
\right),
\end{equation}
where $\mu_R$ and $s_R$ are the mean and standard deviation of $R$, $\alpha$ controls the regulation range, and $\gamma_{i,\tau}\in[1-\alpha,1+\alpha]$. 
Thus, $\gamma_{i,\tau}$ amplifies entries whose reliability is above the average level and attenuates those below it.
The regulated input is then given by
\begin{equation}
\widetilde{X}_{i,\tau}
=
\gamma_{i,\tau}X_{i,\tau}
+
\Delta X_{i,\tau}.
\end{equation}
Here, $\gamma_{i,\tau}X_{i,\tau}$ performs explicit reliability-guided signal regulation, while $\Delta X_{i,\tau}$ injects bounded contextual residual correction. 
Together, these two terms keep $\widetilde{X}$ close to the original input scale while enhancing reliable observations and attenuating unreliable imputed values.

\textbf{\textit{Adaptive gate fusion}}.
The regulated input $\widetilde{X}$ provides a reliability-guided view, but directly replacing $X$ may make the model overly dependent on estimated reliability.
\UniSTOK therefore adopts a dual-view design, where both views are processed by the same kriging backbone $\mathcal{F}$ with shared parameters:
$
\hat{Y}^{b}=\mathcal{F}(X,A),
\hat{Y}^{r}=\mathcal{F}(\widetilde{X},A),
$
where $\hat{Y}^{b}$ and $\hat{Y}^{r}$ denote the base and reliability-regulated predictions, respectively.
This shared-backbone design incorporates reliability guidance while preserving compatibility with existing kriging backbones.

Although the two prediction views are complementary, a fixed fusion rule is insufficient under spatially and temporally varying missingness, where different node-time positions may rely on the two views to different extents.
Therefore, we introduce an entry-wise reliability-guided gate to adaptively combine the two predictions:
\begin{equation}
G
=
\sigma
\left(
\Phi_g
\left(
[\hat{Y}^{b};\hat{Y}^{r};R]
\right)
\right),
\quad
\hat{Y}^{g}
=
(1-G)\odot \hat{Y}^{b}
+
G\odot \hat{Y}^{r}.
\end{equation}
Here, $\Phi_g(\cdot)$ is a lightweight gating network and $G\in[0,1]^{u\times t}$ controls the entry-wise contribution of the regulated branch.
Conditioned on prediction disagreement and reliability, the gate learns when to trust the reliability-regulated view and when to fall back to the base view. 
Finally, the fused prediction is refined as
$
\hat{Y}
=
\Phi_o(\hat{Y}^{g}),
$
where $\Phi_o(\cdot)$ is a lightweight output MLP.

\subsection{Residual Bias Calibration}

Although reliability-guided fusion improves robustness under incomplete observations, the prediction may still exhibit stable, value-dependent residual bias.
We therefore introduce \textit{Residual Bias Calibration} (RBC), a post-hoc calibration module applied after the main predictor has converged.

\textbf{\textit{Residual prototype estimation.}}
Specifically, the main predictor, including reliability-guided signal regulation, the shared backbone, and gate-guided fusion, is optimized to produce the fused prediction $\hat{Y}$.
During training, we estimate value-conditioned residual prototypes from the same predictions used for the training loss.
For each valid position, the residual is defined as
\begin{equation}
E_{i,\tau}
=
\hat{Y}_{i,\tau}
-
Y_{i,\tau},
\quad (i,\tau)\in\Omega .
\end{equation}
where positive and negative values indicate overestimation and underestimation, respectively.
To capture value-dependent bias, we divide the range of predicted values into $K$ bins with centers $\{c_k\}_{k=1}^{K}$, and compute the mean residual in each bin:
\begin{equation}
\bar{E}_{k}
=
\frac{
\sum_{i,\tau} \mathbb{I}(\hat{Y}_{i,\tau}\in \mathcal{B}_k) E_{i,\tau}
}{
\sum_{i,\tau} \mathbb{I}(\hat{Y}_{i,\tau}\in \mathcal{B}_k)+\epsilon
},
\end{equation}
where $\mathcal{B}_k$ denotes the $k$-th bin.
The residual prototypes $\{\bar{E}_{k}\}_{k=1}^{K}$ summarize the empirical overestimation or underestimation tendency of the main predictor across different value ranges.
Details of residual prototype estimation are provided in Appendix~\ref{app:residual_prototype}.

\textbf{\textit{Soft residual retrieval.}}
After the main predictor has converged and the residual prototypes are estimated, we freeze the main predictor and train only the calibration module.
For each fixed prediction $\hat{Y}_{i,\tau}$, we first retrieve a prototype residual from the value-dependent residual prototypes.
Instead of using hard bin assignment, which may introduce discontinuities near bin boundaries, we compute soft weights according to the distance between $\hat{Y}_{i,\tau}$ and all bin centers:
\begin{equation}
w_{i,\tau,k}
=
\frac{
\exp\left(
-\frac{(\hat{Y}_{i,\tau}-c_k)^2}{2\sigma_c^2}
\right)
}{
\sum_{\ell=1}^{K}
\exp\left(
-\frac{(\hat{Y}_{i,\tau}-c_{\ell})^2}{2\sigma_c^2}
\right)
},
\end{equation}
where $\sigma_c$ is the kernel bandwidth controlling the smoothness of soft residual retrieval.
The retrieved prototype residual is then computed as
$
E^{p}_{i,\tau}
=
\sum_{k=1}^{K}
w_{i,\tau,k}\bar{E}_{k}.
$
This provides a smooth estimate of the value-dependent residual bias associated with $\hat{Y}_{i,\tau}$.

\textbf{\textit{Adaptive correction amplitude.}}
The retrieved prototype residual is an average correction within a prediction range, and thus remains a coarse estimate of value-dependent residual bias.
Since node-time positions with similar predicted values may correspond to different spatial states and temporal regimes, applying the same correction uniformly can be inaccurate.
We therefore learn an adaptive correction amplitude to adjust the retrieved prototype residual for each node-time position.

For each prediction $\hat{Y}_{i,\tau}$, we compute two standardized state features:
the node-wise state $z^{V}_{i,\tau}$ based on the historical statistics of node $i$, and the time-wise state $z^{T}_{i,\tau}$ based on the statistics of time position $\tau$.
Together with the magnitude of the retrieved prototype residual, they form the calibration context:
\begin{equation}
a_{i,\tau}
=
\Phi_a
\left(
z^{V}_{i,\tau},
z^{T}_{i,\tau},
|E^{p}_{i,\tau}|
\right),
\end{equation}
where
$
z^{V}_{i,\tau}=(\hat{Y}_{i,\tau}-\mu^{V}_{i})/(\sigma^{V}_{i}+\epsilon)
$
and
$
z^{T}_{i,\tau}=(\hat{Y}_{i,\tau}-\mu^{T}_{\tau})/(\sigma^{T}_{\tau}+\epsilon)
$.
Here, $\Phi_a(\cdot)$ is a lightweight MLP followed by a sigmoid function, producing $a_{i,\tau}\in[0,1]$.
The two standardized states indicate whether the current prediction deviates from node-specific or time-specific statistics, while $|E^{p}_{i,\tau}|$ reflects the strength of the retrieved value-conditioned bias.
Thus, $a_{i,\tau}$ adaptively preserves or dampens the prototype residual according to the local prediction state.
The calibrated output is obtained by applying the amplitude-controlled residual correction:
$
\hat{Y}^{cal}_{i,\tau}
=
\hat{Y}_{i,\tau}
-
a_{i,\tau}E^{p}_{i,\tau}.
$

\begin{theorem}[Consistency of Soft Residual Retrieval]
\label{thm:soft_retrieval}
Let $\bar{E}_k$ denote the empirical bin-wise residual prototype computed over $N_{\mathrm{tr}}$ training samples, and let $\beta(v) = \mathbb{E}[\hat{Y}_{i,\tau} - Y_{i,\tau} \mid \hat{Y}_{i,\tau} = v]$ denote the true conditional bias function. Define the soft-retrieved prototype residual
$$E^{p}_{i,\tau} = \sum_{k=1}^{K} w_{i,\tau,k}\,\bar{E}_k,$$
where the weights $w_{i,\tau,k}$ are the Gaussian kernel weights centered at bin centers $\{c_k\}$. Then, as $N_{\mathrm{tr}} \to \infty$ and $\sigma_c \to 0$ with $K \to \infty$ such that $\max_k |c_{k+1} - c_k| \to 0$, we have
$E^{p}_{i,\tau} \;\xrightarrow{p}\; \beta(\hat{Y}_{i,\tau}),$
i.e., the soft-retrieved residual converges in probability to the true value-dependent bias at the predicted value.
\end{theorem}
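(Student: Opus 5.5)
The plan is to split the error into a stochastic part and a deterministic smoothing part. Write $v=\hat{Y}_{i,\tau}$ and let $\beta_k=\mathbb{E}[E\mid \hat{Y}\in\mathcal{B}_k]$ be the population bin-wise bias. Then
\[
E^{p}_{i,\tau}-\beta(v)=\sum_{k=1}^{K} w_{i,\tau,k}(\bar{E}_k-\beta_k)+\sum_{k=1}^{K} w_{i,\tau,k}\bigl(\beta_k-\beta(v)\bigr),
\]
where we use $\sum_k w_{i,\tau,k}=1$. We will show that both terms vanish, the first in probability and the second deterministically. This requires standing regularity assumptions that the statement leaves implicit, and we will state them explicitly. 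First, $\beta$ is uniformly continuous on a compact value range $\mathcal{V}$ covered by the bins. Second, the residual has a finite second moment, $\mathbb{E}[E^2]<\infty$. Third, the predicted-value density is bounded below by some $p_{\min}>0$ on $\mathcal{V}$. Fourth, the asymptotics are coupled so that $N_{\mathrm{tr}}\min_k P(\hat{Y}\in\mathcal{B}_k)\to\infty$. For the stochastic part we also need the training samples to be i.i.d. (or mixing). In addition, $\hat{Y}_{i,\tau}$ should be either fixed or independent of the training set, and the $\epsilon$ in $\bar{E}_k$ should be negligible.

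\textbf{Stochastic term.} Conditionally on the bin counts $n_k$, the prototype $\bar{E}_k$ is a sample mean of $n_k$ residuals with conditional mean $\beta_k$. Its variance is therefore at most $\mathbb{E}[E^2]/n_k$, up to the $\epsilon$-shrinkage, which is $O(\epsilon/n_k)$. Since the weights form a convex combination, Jensen gives
\[
\mathbb{E}\Bigl[\bigl(\textstyle\sum_k w_k(\bar{E}_k-\beta_k)\bigr)^2\Bigr]\le \max_k \mathbb{E}[(\bar{E}_k-\beta_k)^2].
\]
By a Chernoff bound, $n_k\ge \tfrac12 N_{\mathrm{tr}}p_k$ simultaneously for all $k$ with probability tending to one, provided $K\exp(-cN_{\mathrm{tr}}\min_k p_k)\to 0$. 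Under the coupling assumption, the right-hand side above then tends to $0$, and Chebyshev's inequality gives convergence in probability.

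\textbf{Bias term.} Bound $|\beta_k-\beta(v)|\le |\beta_k-\beta(c_k)|+|\beta(c_k)-\beta(v)|$. The first piece is at most $\omega_\beta(h)$, where $h=\max_k|c_{k+1}-c_k|$ and $\omega_\beta$ is the modulus of continuity of $\beta$, because $\beta_k$ is an average of $\beta$ over a bin of width at most $h$. For the second piece, fix $\delta>0$ and split the sum into bins with $|c_k-v|\le\delta$, which contribute at most $\omega_\beta(\delta)$, and distant bins, which contribute at most $2\|\beta\|_\infty\sum_{|c_k-v|>\delta}w_k$.

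\textbf{Main obstacle.} The hardest step is controlling this tail weight $\sum_{|c_k-v|>\delta}w_k$ when $K\to\infty$ and $\sigma_c\to 0$ together. The numerator is at most $K e^{-\delta^2/2\sigma_c^2}$. The denominator is at least the contribution of the nearest center, which lies within $h$ of $v$, and hence at least $e^{-h^2/2\sigma_c^2}$. This ratio vanishes only if $h\lesssim\sigma_c$ and $\log K=o(\delta^2/\sigma_c^2)$. Both hold for a regular grid with $h=o(\sigma_c)$, since then $K\approx|\mathcal{V}|/h$ is polynomial in $1/\sigma_c$. A sharper alternative, which I would try first, is to lower-bound the denominator by the roughly $\sigma_c/h$ centers within distance $\sigma_c$ of $v$. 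This gives a denominator of order $\sigma_c/h$ and bounds the tail ratio by $(h/\sigma_c)\sum_{|c_k-v|>\delta}e^{-(c_k-v)^2/2\sigma_c^2}$. That quantity is a Riemann sum approximating a Gaussian tail integral, which is $O(e^{-\delta^2/4\sigma_c^2})$ and tends to $0$.

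Letting $\delta\to0$ after $\sigma_c\to 0$ and $h\to 0$ shows that the bias term vanishes. Combining this with the stochastic term yields $E^{p}_{i,\tau}\xrightarrow{p}\beta(\hat{Y}_{i,\tau})$.
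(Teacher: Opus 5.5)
Your proposal is correct in substance. It uses the same two-term split as the paper: Term I $=\sum_k w_{i,\tau,k}(\bar E_k-\beta_k)$ and Term II $=\sum_k w_{i,\tau,k}\beta_k-\beta(\hat Y_{i,\tau})$. You control each term differently, however.

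The paper's treatment of each term:
\begin{itemize}
\item For Term I, the paper bounds it by $\max_k|\bar E_k-\beta_k|$ and applies the strong law bin by bin. This is only justified for fixed, finite $K$.
\item For Term II, it asserts that as $\sigma_c\to0$ the kernel collapses onto the nearest center $k^*$, then uses the Lipschitz bound $|\beta_{k^*}-\beta(v)|=O(\Lambda h)$.
\end{itemize}
Both steps are really iterated limits: $N_{\mathrm{tr}}\to\infty$ at fixed $K,\sigma_c$, then $\sigma_c\to0$ at fixed $K$, then $K\to\infty$. In the joint limit the collapse claim fails when $h=o(\sigma_c)$, because the mass then spreads over roughly $\sigma_c/h$ centers.

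Your route handles the joint limit directly:
\begin{itemize}
\item For Term I, a per-bin mean-squared-error bound plus Jensen over the convex weights, with the rate coupling $N_{\mathrm{tr}}\min_k p_k\to\infty$, replaces the fixed-$K$ SLLN.
\item For Term II, a near/far split via the modulus of continuity, with an explicit Gaussian tail-weight bound, replaces the point-mass argument.
\end{itemize}
The payoff is that you correctly detect that the statement needs hypotheses it does not state. With only $\max_k|c_{k+1}-c_k|\to0$, a dense cluster of centers far from $v$ can carry most of the kernel mass. Some quasi-uniformity of the grid, or a coupling such as $\sigma_c^2\log K\to0$, is therefore needed. The paper's argument is shorter and assumes only $\mathbb{E}|E|<\infty$ and Lipschitz $\beta$, but it really proves only the iterated-limit version.

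A few details to tighten:
\begin{itemize}
\item Let $\sigma_k^2=\mathrm{Var}(E\mid\hat Y\in\mathcal B_k)$. It is bounded by $\mathbb{E}[E^2\mid\hat Y\in\mathcal B_k]=\mathbb{E}[E^2\,\mathbb{I}(\hat Y\in\mathcal{B}_k)]/p_k$, not by $\mathbb{E}[E^2]$. This can grow as bins shrink. Either assume $\sup_v\mathbb{E}[E^2\mid\hat Y=v]<\infty$, or strengthen the coupling to $N_{\mathrm{tr}}\min_k p_k^2\to\infty$.
\item The simultaneous Chernoff bound is unnecessary. After Jensen you only need each per-bin MSE. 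For binomial counts this is $O\bigl(\sigma_k^2/(N_{\mathrm{tr}}p_k)\bigr)+\beta_k^2(1-p_k)^{N_{\mathrm{tr}}}$, up to $\epsilon$ terms.
\item In the tail-weight step, the crude bound $K e^{-(\delta^2-h^2)/2\sigma_c^2}$ needs only $\sigma_c^2\log K\to0$, not $h\lesssim\sigma_c$.
\item Also, $h=o(\sigma_c)$ bounds $K$ from below, not above, so it does not make $K$ polynomial in $1/\sigma_c$.
\end{itemize}
The clean version on a quasi-uniform grid combines two regimes:
\begin{itemize}
\item When $h\lesssim\sigma_c$, use your Riemann-sum bound.
\item When $\sigma_c\lesssim h$, use the crude bound. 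There $\log K\asymp\log(1/h)$, so $\sigma_c^2\log K\lesssim h^2\log(1/h)\to0$.
\end{itemize}
Together these cover every joint regime.
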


This theorem guarantees that the residual prototypes $\{\bar{E}_k\}$ are not arbitrary heuristics but consistent estimates of the true value-dependent bias $\beta(v)$. Consequently, the calibrated output $\hat{Y}^{cal}_{i,\tau} = \hat{Y}_{i,\tau} - a_{i,\tau}E^{p}_{i,\tau}$ with $a_{i,\tau} \to 1$ asymptotically removes the value-dependent bias, providing a principled foundation.

\textbf{\textit{Optimization objective.}}
The optimization proceeds in two stages with different objectives.
The main predictor, including RSR, the shared backbone, and reliability-guided fusion, is trained with the standard masked kriging loss:
\begin{equation}
\mathcal{L}_{\mathrm{main}}
=
\frac{
\sum_{i,\tau}
\Omega_{i,\tau}
\left|
\hat{Y}_{i,\tau}
-
Y_{i,\tau}
\right|
}{
\sum_{i,\tau}\Omega_{i,\tau}
+
\epsilon
},
\end{equation}
where $\Omega$ denotes the valid supervision mask.
After the main predictor converges and the residual prototypes are estimated, all parameters of the main predictor are fixed.
Only the calibration parameters are then optimized using a balanced bin-wise MAE objective~\cite{willmott2005advantages}:
\begin{equation}
\mathcal{L}_{\mathrm{cal}}
=
\frac{1}{K}
\sum_{k=1}^{K}
\frac{
\sum_{i,\tau}
\Omega_{i,\tau}
\mathbb{I}(\hat{Y}_{i,\tau}\in\mathcal{B}_k)
\left|
\hat{Y}^{cal}_{i,\tau}
-
Y_{i,\tau}
\right|
}{
\sum_{i,\tau}
\Omega_{i,\tau}
\mathbb{I}(\hat{Y}_{i,\tau}\in\mathcal{B}_k)
+
\epsilon
}.
\end{equation}
By averaging errors within and across prediction bins, this objective gives each prediction range equal weight and prevents calibration from being dominated by frequent value regions.

\section{Experiments}

This section evaluates \UniSTOK by answering five research questions:

\begin{itemize} [leftmargin=*, itemsep=4pt]
    \item \textbf{RQ1:}
    Can existing baselines be consistently improved when used as backbones in our framework?
    \item \textbf{RQ2:}
    Does our end-to-end framework outperform the two-stage imputation-then-kriging pipeline?
    \item \textbf{RQ3:}
    What is the contribution of each proposed module?
    \item \textbf{RQ4:}
    How sensitive is performance to key hyperparameters?
    \item \textbf{RQ5:}
    Does our framework achieve a favorable parameter--accuracy trade-off?
\end{itemize}

\subsection{Experimental Setup}

\textbf{\textit{Datasets}}.
We conduct experiments on four real-world datasets:
\textbf{METR-LA}~\cite{li2017diffusion}, \textbf{PEMS-BAY}~\cite{li2017diffusion}, \textbf{NREL-AL}~\cite{wu2021inductive}, and \textbf{NZ-Highway}~\cite{li2024high}.
NZ-Highway is a traffic volume dataset curated from New Zealand highway monitoring records, containing 61 stations with an original missing rate of 31.20\%.
More details of these datasets are provided in Appendix~\ref{app:datasets}.

\textbf{\textit{Baselines}.}
We compare \UniSTOK with nine representative baselines from three categories.
\ding{182} \textbf{\textit{Basic spatio-temporal models.}} DCRNN~\cite{li2017diffusion}, GMAN~\cite{zheng2020gman}, and STGCN~\cite{yu2017spatio}.
\ding{183} \textbf{\textit{Basic kriging models.}} IGNNK~\cite{wu2021inductive}, INCREASE~\cite{zheng2023increase}, and SATCN~\cite{wu2021spatial}.
\ding{184} \textbf{\textit{Advanced kriging models.}} DarkFarseer~\cite{liang2025darkfarseer}, KITS~\cite{xu2025kits}, and STA-GANN~\cite{li2025sta}.
Detailed descriptions of the baselines are provided in Appendix~\ref{app:baselines}.

\subsection{Performance Comparison (\textbf{RQ1})}

To answer \textbf{RQ1}, we instantiate \UniSTOK with each ISK method as the backbone.
Unless otherwise specified, we repeat each experiment with different random seeds and report the mean performance.
Table~\ref{tab:main_results} reports the performance of our framework across different ISK backbones.
\UniSTOK consistently improves almost all models across three datasets and metrics.
The gains are observed not only for classical forecasting backbones but also for recent kriging models. 
We further report results under random missingness, the no-missing setting, and NZ-Highway, a real-world block-missing dataset, in Appendix~\ref{app:additional_results}.

\begin{table*}[t]
\centering
\caption{
  Performance comparison of plug-in enhancement across backbones and datasets.
  Lower is better.
  \textcolor{basegray}{Gray}: base model results;
  \textbf{Black}: results with our plug-in (\textbf{+Ours}).
  {\color{red}$\downarrow$\!x}: relative improvement (\%) of \textbf{+Ours} over Base.
}
\label{tab:main_results}
\renewcommand{\tabcolsep}{3pt}
\renewcommand{\arraystretch}{1.05}
\resizebox{0.85\textwidth}{!}{%
\begin{tabular}{ll r | ccc | ccc | cc | c}
\toprule

%% ── Header ──────────────────────────────────────────────────────────────
\multicolumn{3}{c|}{\multirow{2}{*}{\scalebox{1.05}{\textbf{Models}}}}
  & \multicolumn{3}{c|}{\textbf{METR-LA}}
  & \multicolumn{3}{c|}{\textbf{PEMS-BAY}}
  & \multicolumn{2}{c|}{\textbf{NREL-AL}}
  & \multirow{2}{*}{\shortstack{\textbf{Avg.}\\\textbf{Improv.(\%)}}} \\
\cmidrule(lr){4-6}\cmidrule(lr){7-9}\cmidrule(lr){10-11}
\multicolumn{3}{c|}{}
  & \textbf{MAE} & \textbf{RMSE} & \textbf{MAPE}
  & \textbf{MAE} & \textbf{RMSE} & \textbf{MAPE}
  & \textbf{MAE} & \textbf{RMSE}
  & \\
\midrule

%% ── DCRNN ───────────────────────────────────────────────────────────────
\multirow{2}{*}{\shortstack[l]{\textbf{DCRNN}\\{\scriptsize ICLR'18}}}
  & \multirow{2}{*}{}
  & \textcolor{basegray}{Base}
  & \basenum{6.20}  & \basenum{9.89}   & \basenum{17.30}
  & \basenum{3.70}  & \basenum{6.76}   & \basenum{8.86}
  & \basenum{2.11}  & \basenum{4.23}
  & \textcolor{basegray}{--} \\
  & & \textbf{+Ours}
  & \textbf{5.93}\imp{4.4}   & \textbf{9.48}\imp{4.1}   & \textbf{16.34}\imp{5.5}
  & \textbf{3.50}\imp{5.4}   & \textbf{6.70}\imp{0.9}   & \textbf{8.71}\imp{1.7}
  & \textbf{2.08}\imp{1.4}   & \textbf{4.06}\imp{4.0}
  & \textbf{\color{red}3.4} \\
\midrule

%% ── STGCN ───────────────────────────────────────────────────────────────
\multirow{2}{*}{\shortstack[l]{\textbf{STGCN}\\{\scriptsize IJCAI'18}}}
  & \multirow{2}{*}{}
  & \textcolor{basegray}{Base}
  & \basenum{6.28}  & \basenum{10.36}  & \basenum{17.42}
  & \basenum{3.68}  & \basenum{6.62}   & \basenum{8.72}
  & \basenum{2.25}  & \basenum{4.34}
  & \textcolor{basegray}{--} \\
  & & \textbf{+Ours}
  & \textbf{5.76}\imp{8.3}   & \textbf{9.40}\imp{9.3}   & \textbf{16.30}\imp{6.4}
  & \textbf{3.40}\imp{7.6}   & \textbf{6.59}\imp{0.5}   & \textbf{8.43}\imp{3.3}
  & \textbf{1.97}\imp{12.4}  & \textbf{3.85}\imp{11.3}
  & \textbf{\color{red}7.4} \\
\midrule

%% ── GMAN ────────────────────────────────────────────────────────────────
\multirow{2}{*}{\shortstack[l]{\textbf{GMAN}\\{\scriptsize AAAI'20}}}
  & \multirow{2}{*}{}
  & \textcolor{basegray}{Base}
  & \basenum{6.44}  & \basenum{11.27}  & \basenum{17.98}
  & \basenum{4.01}  & \basenum{7.58}   & \basenum{9.12}
  & \basenum{1.71}  & \basenum{3.24}
  & \textcolor{basegray}{--} \\
  & & \textbf{+Ours}
  & \textbf{5.96}\imp{7.5}   & \textbf{9.50}\imp{15.7}  & \textbf{16.51}\imp{8.2}
  & \textbf{3.75}\imp{6.5}   & \textbf{6.99}\imp{7.8}   & \textbf{8.89}\imp{2.5}
  & \textbf{1.48}\imp{13.5}  & \textbf{3.04}\imp{6.2}
  & \textbf{\color{red}8.5} \\
\midrule

%% ── IGNNK ───────────────────────────────────────────────────────────────
\multirow{2}{*}{\shortstack[l]{\textbf{IGNNK}\\{\scriptsize AAAI'21}}}
  & \multirow{2}{*}{}
  & \textcolor{basegray}{Base}
  & \basenum{6.07}  & \basenum{9.56}   & \basenum{17.17}
  & \basenum{3.72}  & \basenum{6.73}   & \basenum{8.90}
  & \basenum{2.31}  & \basenum{4.49}
  & \textcolor{basegray}{--} \\
  & & \textbf{+Ours}
  & \textbf{5.54}\imp{8.7}   & \textbf{9.07}\imp{5.1}   & \textbf{15.58}\imp{9.3}
  & \textbf{3.45}\imp{7.3}   & \textbf{6.56}\imp{2.5}   & \textbf{8.25}\imp{7.3}
  & \textbf{1.92}\imp{16.9}  & \textbf{3.80}\imp{15.4}
  & \textbf{\color{red}9.1} \\
\midrule

%% ── SATCN ───────────────────────────────────────────────────────────────
\multirow{2}{*}{\shortstack[l]{\textbf{SATCN}\\{\scriptsize Arxiv'21}}}
  & \multirow{2}{*}{}
  & \textcolor{basegray}{Base}
  & \basenum{6.12}  & \basenum{9.84}   & \basenum{17.66}
  & \basenum{3.68}  & \basenum{6.82}   & \basenum{8.92}
  & \basenum{1.66}  & \basenum{3.17}
  & \textcolor{basegray}{--} \\
  & & \textbf{+Ours}
  & \textbf{5.93}\imp{3.1}   & \textbf{9.64}\imp{2.0}   & \textbf{16.97}\imp{3.9}
  & \textbf{3.65}\imp{0.8}   & \textbf{6.77}\imp{0.7}   & \textbf{8.86}\imp{0.7}
  & \textbf{1.39}\imp{16.3}  & \textbf{2.94}\imp{7.3}
  & \textbf{\color{red}4.4} \\
\midrule

%% ── INCREASE ────────────────────────────────────────────────────────────
\multirow{2}{*}{\shortstack[l]{\textbf{INCREASE}\\{\scriptsize WWW'23}}}
  & \multirow{2}{*}{}
  & \textcolor{basegray}{Base}
  & \basenum{5.93}  & \basenum{9.47}   & \basenum{17.09}
  & \basenum{3.77}  & \basenum{6.88}   & \basenum{8.89}
  & \basenum{2.11}  & \basenum{4.25}
  & \textcolor{basegray}{--} \\
  & & \textbf{+Ours}
  & \textbf{5.35}\imp{9.8}   & \textbf{9.00}\imp{5.0}   & \textbf{16.60}\imp{2.9}
  & \textbf{3.47}\imp{8.0}   & \textbf{6.74}\imp{2.0}   & \textbf{8.58}\imp{3.5}
  & \textbf{2.08}\imp{1.4}   & \textbf{3.96}\imp{6.8}
  & \textbf{\color{red}4.9} \\
\midrule

%% ── KITS ────────────────────────────────────────────────────────────────
\multirow{2}{*}{\shortstack[l]{\textbf{KITS}\\{\scriptsize AAAI'25}}}
  & \multirow{2}{*}{}
  & \textcolor{basegray}{Base}
  & \basenum{6.55}  & \basenum{11.38}  & \basenum{22.65}
  & \basenum{4.09}  & \basenum{7.67}   & \basenum{10.56}
  & \basenum{2.70}  & \basenum{5.07}
  & \textcolor{basegray}{--} \\
  & & \textbf{+Ours}
  & \textbf{5.90}\imp{9.9}   & \textbf{9.67}\imp{15.0}  & \textbf{17.62}\imp{22.2}
  & \textbf{4.05}\imp{0.9}   & \textbf{7.01}\imp{7.3}   & \textbf{9.53}\imp{9.8}
  & \textbf{2.62}\imp{3.0}   & \textbf{4.18}\imp{17.6}
  & \textbf{\color{red}10.7} \\
\midrule

%% ── STA-GANN ────────────────────────────────────────────────────────────
\multirow{2}{*}{\shortstack[l]{\textbf{STA-GANN}\\{\scriptsize CIKM'25}}}
  & \multirow{2}{*}{}
  & \textcolor{basegray}{Base}
  & \basenum{6.18}  & \basenum{9.81}   & \basenum{18.03}
  & \basenum{3.74}  & \basenum{6.72}   & \basenum{8.92}
  & \basenum{1.56}  & \basenum{3.05}
  & \textcolor{basegray}{--} \\
  & & \textbf{+Ours}
  & \textbf{5.70}\imp{7.8}   & \textbf{9.31}\imp{5.1}   & \textbf{16.48}\imp{8.6}
  & \textbf{3.51}\imp{6.1}   & \textbf{6.60}\imp{1.8}   & \textbf{8.51}\imp{4.6}
  & \textbf{1.53}\imp{1.9}   & \textbf{2.99}\imp{2.0}
  & \textbf{\color{red}4.7} \\
\midrule

%% ── DarkFarseer ─────────────────────────────────────────────────────────
\multirow{2}{*}{\shortstack[l]{\textbf{DarkFarseer}\\{\scriptsize AAAI'26}}}
  & \multirow{2}{*}{}
  & \textcolor{basegray}{Base}
  & \basenum{6.32}  & \basenum{10.70}  & \basenum{19.62}
  & \basenum{4.43}  & \basenum{8.62}   & \basenum{12.03}
  & \basenum{2.20}  & \basenum{4.09}
  & \textcolor{basegray}{--} \\
  & & \textbf{+Ours}
  & \textbf{5.53}\imp{12.5}  & \textbf{9.47}\imp{11.5}  & \textbf{16.46}\imp{16.1}
  & \textbf{4.15}\imp{6.3}   & \textbf{8.49}\imp{1.5}   & \textbf{11.63}\imp{3.3}
  & \textbf{1.96}\imp{10.9}  & \textbf{3.78}\imp{7.6}
  & \textbf{\color{red}8.7} \\

\midrule
%% ── Overall Avg. ────────────────────────────────────────────────────────
\rowcolor{blue!8}
\multicolumn{3}{c|}{\scalebox{1.02}{\textbf{Avg.\ Improv.\ (\%)}}}
  & \textbf{\color{red}8.0}  & \textbf{\color{red}8.1}  & \textbf{\color{red}9.2}
  & \textbf{\color{red}5.4}  & \textbf{\color{red}2.8}  & \textbf{\color{red}4.1}
  & \textbf{\color{red}8.6}  & \textbf{\color{red}8.7}
  & \textbf{\color{red}6.9} \\
\bottomrule
\end{tabular}%
}
\end{table*}

\subsection{Comparison with Two-stage Pipeline (\textbf{RQ2})}

\begin{wraptable}{r}{6cm} 
\vspace{-0.6cm}
\centering
\footnotesize
\caption{
  Comparison with a two-stage pre-imputation pipeline.
}
\label{table:comparison}
\renewcommand{\tabcolsep}{4pt}
\renewcommand{\arraystretch}{1.05}
\resizebox{\linewidth}{!}{%
\begin{tabular}{l r | ccc}
\toprule

\multicolumn{2}{c|}{\textbf{Model}}
  & \textbf{METR-LA} & \textbf{PEMS-BAY} & \textbf{NREL-AL} \\
\midrule

%% ── ImputeFormer + STA-GANN ─────────────────────────────────────────
\multirow{3}{*}{ImputeFormer+STA-GANN}
  & MAE   & \pmstd{6.04}{0.15} & \pmstd{3.84}{0.05} & \pmstd{1.91}{0.05} \\
  & RMSE  & \pmstd{9.34}{0.23} & \pmstd{6.76}{0.09} & \pmstd{3.95}{0.19} \\
  & MAPE  & \pmstd{18.11}{0.88} & \pmstd{9.14}{0.05} & \textcolor{basegray}{--} \\
\midrule

%% ── Ours ────────────────────────────────────────────────────────────
\multirow{3}{*}{\textbf{\UniSTOK}}
  & MAE   & \pmstdbest{5.84}{0.05} & \pmstdbest{3.37}{0.00} & \pmstdbest{1.76}{0.01} \\
  & RMSE  & \pmstdbest{9.22}{0.05} & \pmstdbest{6.09}{0.03} & \pmstdbest{3.41}{0.03} \\
  & MAPE  & \pmstdbest{16.86}{0.13} & \pmstdbest{8.32}{0.04} & -- \\
\bottomrule
\end{tabular}%
}
\end{wraptable}

To answer \textbf{RQ2}, we compare \UniSTOK with a two-stage pipeline that first imputes missing values on observed sensors and then performs kriging on unobserved sensors.
Specifically, we use ImputeFormer~\cite{nie2024imputeformer} for pre-imputation and feed the completed observations into STA-GANN as the downstream ISK model.
As shown in Table~\ref{table:comparison}, \UniSTOK consistently outperforms the two-stage pipeline.

\subsection{Ablation Study (\textbf{RQ3})}

To answer \textbf{RQ3}, we evaluate the contributions of the two core modules, RSR and RBC, by comparing the full model with three variants: w/o RSR, w/o RBC, and the original backbone.
As shown in Figure~\ref{fig:ablation_mae}, both modules contribute to \UniSTOK.
The original backbone exhibits the largest degradation, confirming the overall effectiveness of the framework.
Removing RBC leads to a larger performance drop than removing RSR, highlighting residual bias as a key bottleneck, while the degradation of w/o RSR confirms the need to suppress unreliable signals.
Detailed module-level ablations are provided in Appendix~\ref{app:additional_ablation}.

\begin{figure*}[t]
    \centering
    \includegraphics[width=\textwidth]{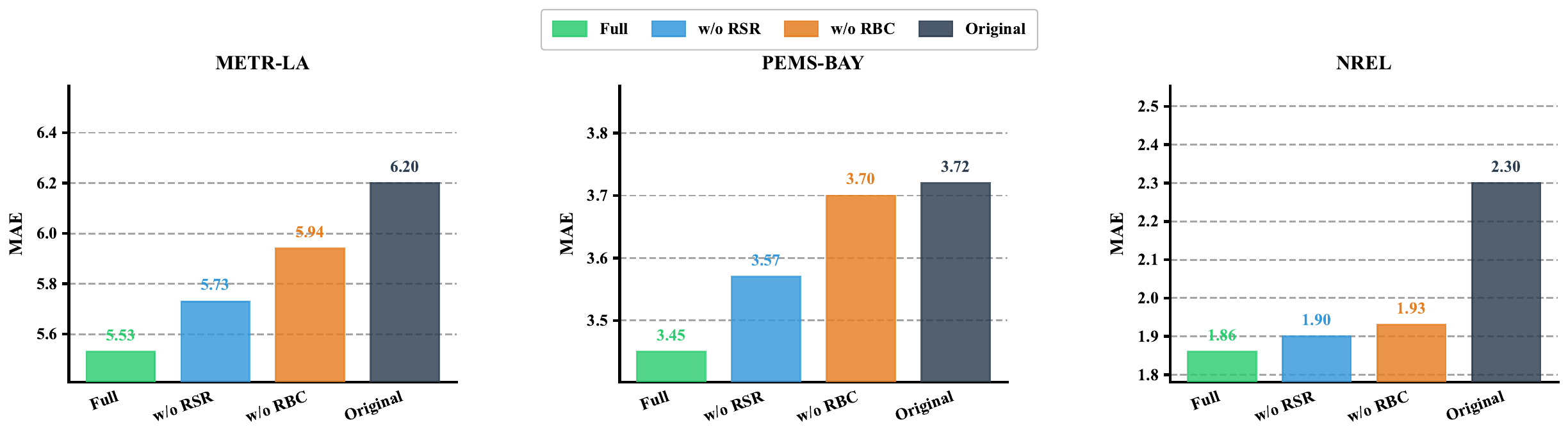}
    \caption{
    Ablation study of the core modules on three datasets.
    \textit{Full} denotes the complete \UniSTOK framework; \textit{w/o RSR} removes reliability-guide signal regulation; \textit{w/o RBC} removes residual bias calibration; \textit{Original} denotes the backbone without our framework.
    }
    \label{fig:ablation_mae}
\end{figure*}

\subsection{Hyperparameter Sensitivity (\textbf{RQ4})}

\begin{figure}[t]
    \centering
    \includegraphics[width=\textwidth]{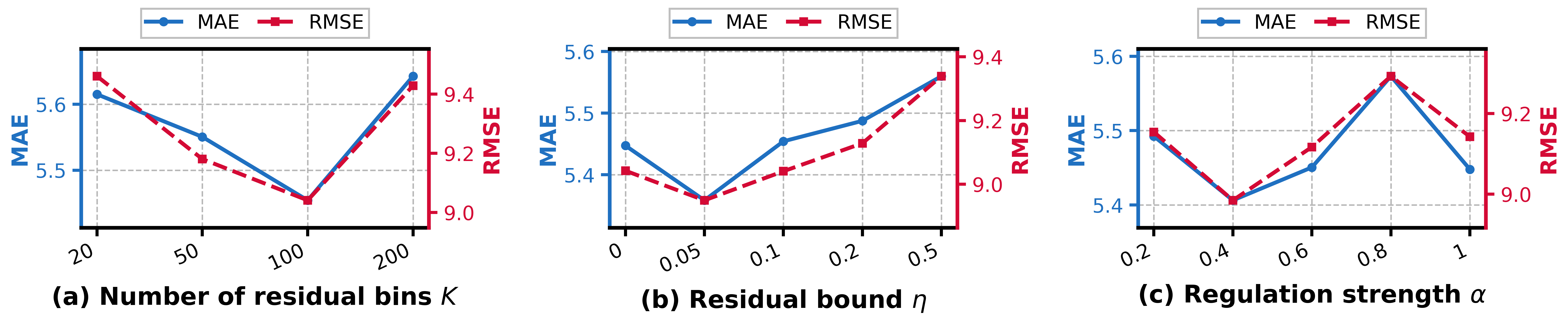}
    \caption{
    Hyperparameter sensitivity of \UniSTOK with IGNNK on METR-LA, varying the number of residual bins $K$, the residual correction bound $\eta$, and the reliability regulation strength $\alpha$.
    }
    \label{fig:hyperparameter_metr}
\end{figure}

To answer \textbf{RQ4}, we analyze the sensitivity of three key hyperparameters of \UniSTOK using IGNNK as the backbone on METR-LA.
As shown in Figure~\ref{fig:hyperparameter_metr}, \UniSTOK maintains stable performance across different hyperparameter settings.
A moderate number of residual bins enables more reliable residual prototype estimation, whereas an overly coarse or overly fine partition may weaken calibration.
For $\eta$ and $\alpha$, conservative settings are generally preferable on METR-LA, since overly strong residual correction or reliability regulation may introduce unnecessary perturbations to the kriging predictions.
These results suggest that, on METR-LA, \UniSTOK is not overly sensitive to the selected hyperparameters and benefits from moderate reliability regulation and bounded residual calibration.
Additional results on PEMS-BAY and NREL-AL are provided in Appendix~\ref{app:hyperparameter_sensitivity}.

\subsection{Efficiency \& Accuracy}

\begin{wrapfigure}{r}{6cm}
    \centering
    \includegraphics[width=0.88\linewidth]{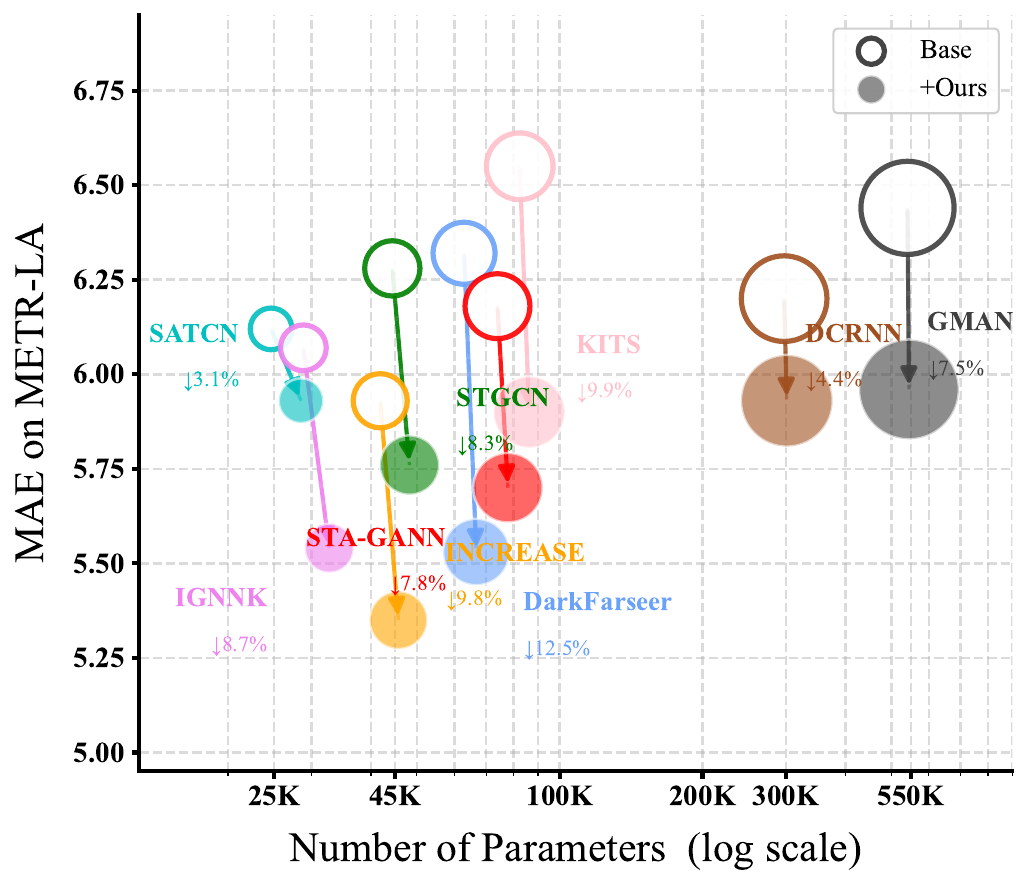}
    \caption{
    Parameter--accuracy trade-off on METR-LA.
    Empty and filled circles denote base backbones and their \UniSTOK-enhanced variants, respectively.
    }
    \label{fig:parameter_efficiency}
\end{wrapfigure}

To answer \textbf{RQ5}, Figure~\ref{fig:parameter_efficiency} compares the parameter--accuracy trade-off on METR-LA.
\UniSTOK introduces only lightweight plug-in components, including reliability regulation, adaptive fusion, and residual calibration, while keeping the backbone architecture unchanged.
As a result, the \textbf{+Ours} points remain nearly horizontally aligned with their corresponding base models, indicating negligible additional parameter cost relative to the original backbone size.
Meanwhile, all arrows consistently point downward, showing that \UniSTOK reduces MAE across backbones with substantially different parameter scales, from compact kriging models to larger attention-based architectures.
This suggests that the performance gain does not simply come from increased model capacity, but from the proposed reliability-aware regulation and residual bias calibration mechanisms.
Overall, \UniSTOK achieves a favorable parameter--accuracy trade-off.
Detailed parameter analysis is provided in Appendix~\ref{app:parameter_analysis}.

\section{Conclusion}
\label{sec:conclusion}

This paper studies spatio-temporal kriging under incomplete observations.
To address this problem, we propose \UniSTOK, a plug-and-play framework that combines Reliability-guided Signal Regulation for suppressing unreliable propagation with Residual Bias Calibration for correcting prediction bias.
Extensive experiments on real-world datasets show that \UniSTOK consistently improves multiple kriging backbones with negligible additional parameter cost.
Future work may extend \UniSTOK to multivariate signals, adaptive graph structures, and broader missingness scenarios.

\newpage

%=============================================================
%                       REFERENCES
%=============================================================
\small

\bibliographystyle{plainnat}
\bibliography{references}

\newpage

%%%%%%%%%%%%%%%%%%%%%%%%%%%%%%%%%%%%%%%%%%%%%%%%%%%%%%%%%%%%

\appendix
\etocdepthtag.toc{appendix}

\clearpage
\begingroup
\renewcommand{\contentsname}{Appendices}
\etocsettagdepth{main}{none}
\etocsettagdepth{appendix}{subsection}
\setcounter{tocdepth}{2}
\tableofcontents
\endgroup
\clearpage

\section{Notation}
\label{app:notation}

{\small
\begin{longtable}{@{\hspace{6pt}} p{3.4cm} p{9.8cm} @{\hspace{6pt}}}

\caption{Notations used in \UniSTOK.}
\label{tab:notation} \\

\toprule[1.2pt]
\textit{Notation} & \textit{Definition} \\
\midrule[0.6pt]
\endfirsthead

\multicolumn{2}{c}{\small\tablename~\thetable{} (continued)} \\[2pt]
\toprule[1.2pt]
\textit{Notation} & \textit{Definition} \\
\midrule[0.6pt]
\endhead

\bottomrule[1.2pt]
\endfoot

\bottomrule[1.2pt]
\endlastfoot

%% --- Data and Task ---
\rowcolor{gray!8}
\multicolumn{2}{@{\hspace{6pt}}l@{\hspace{6pt}}}%
  {\small\textsc{Data \& Task}} \\[1pt]

$\mathcal{G}=(\mathcal{V},\mathcal{E})$
  & Sensor graph with node set $\mathcal{V}$ and edge set $\mathcal{E}$ \\[2pt]

$\mathcal{V}_o$ / $\mathcal{V}_u$
  & Observed / unobserved node sets, with $|\mathcal{V}_o|=N$ and $|\mathcal{V}_u|=U$ \\[2pt]

$\mathcal{X}\in\mathbb{R}^{N\times T}$
  & Historical measurements at observed nodes over the full time horizon \\[2pt]

$\mathcal{Y}\in\mathbb{R}^{U\times T}$
  & Ground-truth signals at unobserved nodes \\[2pt]

$\mathcal{A}\in[0,1]^{(N+U)\times(N+U)}$
  & Weighted adjacency matrix of the whole sensor graph \\[2pt]

$n$ / $u$ / $t$
  & Number of sampled observed nodes / unobserved nodes / window length \\[2pt]

$X\in\mathbb{R}^{(n+u)\times t}$
  & Input signals of a sampled subgraph within one temporal window \\[2pt]

$A\in[0,1]^{(n+u)\times(n+u)}$
  & Adjacency matrix of the sampled induced subgraph \\[2pt]

$Y\in\mathbb{R}^{u\times t}$
  & Ground-truth target signals of sampled unobserved nodes \\[2pt]

$\hat{Y}\in\mathbb{R}^{u\times t}$
  & Predicted kriging output for unobserved nodes \\[2pt]

$M\in\{0,1\}^{(n+u)\times t}$
  & Observation mask, where $M_{i,\tau}=1$ indicates an observed entry \\[2pt]

$\mathcal{F}(\cdot)$
  & Backbone kriging model used in the shared predictor \\[3pt]

%% --- Missingness and Reliability ---
\rowcolor{gray!8}
\multicolumn{2}{@{\hspace{6pt}}l@{\hspace{6pt}}}%
  {\small\textsc{RELIABILITY-GUIDED SIGNAL REGULATION}} \\[1pt]

$\delta_{i,\tau}$
  & Bidirectional distance from entry $(i,\tau)$ to the nearest observed value of node $i$ \\[2pt]

$r^T_{i,\tau}$
  & Temporal reliability score derived from the missing span $\delta_{i,\tau}$ \\[2pt]

$r^S_{i,\tau}$
  & Spatial reliability score derived from observed neighboring support at time $\tau$ \\[2pt]

$R_{i,\tau}$
  & Spatio-temporal reliability score combining $r^T_{i,\tau}$ and $r^S_{i,\tau}$ \\[2pt]

$q^T_{\tau}$ / $q^S_i$
  & Temporal / spatial reliability queries derived from reliability scores \\[2pt]

$k^T_{\tau}$ / $k^S_i$
  & Temporal / spatial availability keys derived from the observation mask \\[2pt]

$S^T_{\tau,\tau'}$ / $S^S_{ij}$
  & Temporal / spatial reliability-aware affinity scores \\[2pt]

$X^T_{i,\tau}$ / $X^S_{i,\tau}$
  & Temporal / spatial residual evidence aggregated from reliable contexts \\[2pt]

$\Delta X_{i,\tau}$
  & Bounded attention residual used to refine the input signal \\[2pt]

$\eta$
  & Magnitude coefficient controlling the strength of the residual correction \\[2pt]

$\gamma_{i,\tau}$
  & Reliability-guided scale coefficient for signal regulation \\[2pt]

$\alpha$
  & Hyperparameter controlling the regulation range of $\gamma_{i,\tau}$ \\[2pt]

$\widetilde{X}_{i,\tau}$
  & Regulated input constructed by combining $\gamma_{i,\tau}X_{i,\tau}$ and $\Delta X_{i,\tau}$ \\[2pt]

$\hat{Y}^{b}$
  & Base prediction produced from the original input $X$ \\[2pt]

$\hat{Y}^{r}$
  & Reliability-regulated prediction produced from the regulated input $\widetilde{X}$ \\[2pt]

$G\in[0,1]^{u\times t}$
  & Entry-wise reliability-guided fusion gate \\[2pt]

$\hat{Y}^{g}$
  & Gate-fused prediction before the final output refinement \\[3pt]

%% --- Residual Bias Calibration ---
\rowcolor{gray!8}
\multicolumn{2}{@{\hspace{6pt}}l@{\hspace{6pt}}}%
  {\small\textsc{Residual Bias Calibration}} \\[1pt]

$E_{i,\tau}$
  & Prediction residual, defined as $E_{i,\tau}=\hat{Y}_{i,\tau}-Y_{i,\tau}$ \\[2pt]

$\mathcal{B}_k$ / $c_k$
  & The $k$-th prediction bin / its bin center \\[2pt]

$\bar{E}_k$
  & Mean residual prototype estimated within bin $\mathcal{B}_k$ \\[2pt]

$w_{i,\tau,k}$
  & Soft retrieval weight between prediction $\hat{Y}_{i,\tau}$ and bin center $c_k$ \\[2pt]

$E^p_{i,\tau}$
  & Soft-retrieved prototype residual for entry $(i,\tau)$ \\[2pt]

$z^V_{i,\tau}$ / $z^T_{i,\tau}$
  & Node-wise / time-wise standardized state feature for calibration \\[2pt]

$a_{i,\tau}$
  & Adaptive correction amplitude for the retrieved prototype residual \\[2pt]

$\hat{Y}^{cal}_{i,\tau}$
  & Calibrated prediction after amplitude-controlled residual correction \\[3pt]

%% --- Optimization ---
\rowcolor{gray!8}
\multicolumn{2}{@{\hspace{6pt}}l@{\hspace{6pt}}}%
  {\small\textsc{Optimization}} \\[1pt]

$\Omega\in\{0,1\}^{u\times t}$
  & Valid supervision mask for computing the kriging loss \\[1pt]

$\mathcal{L}_{\mathrm{main}}$
  & Masked MAE loss for training the main predictor \\[1pt]

$\mathcal{L}_{\mathrm{cal}}$
  & Balanced bin-wise MAE loss for training the calibration module \\[1pt]

$\mathbb{I}(\cdot)$
  & Indicator function \\

\end{longtable}
}

\section{Implementation Details}

\subsection{Datasets}
\label{app:datasets}

We provide a detailed description of the three real-world datasets used in our experiments.

\paragraph{METR-LA.}
METR-LA is a traffic speed dataset collected from loop detectors on the Los Angeles County highway network.
It contains traffic speed measurements from $207$ sensors over four months, from March 1, 2012 to June 30, 2012, with a sampling interval of $5$ minutes \cite{li2017diffusion}.

\paragraph{PEMS-BAY.}
PEMS-BAY is a traffic speed dataset collected by the California Performance Measurement System (PeMS) in the San Francisco Bay Area.
It contains traffic speed measurements from $325$ sensors, collected from January 1, 2017 to June 30, 2017, with a sampling interval of $5$ minutes~\cite{li2017diffusion}.

\paragraph{NREL-AL.}
NREL-AL is a solar power generation dataset derived from the National Renewable Energy Laboratory (NREL) solar power data for the Alabama region.
It records solar power outputs from $137$ photovoltaic power plants in Alabama in 2006, with a sampling interval of $5$ minutes~\cite{wu2021inductive}.

\paragraph{NZ-Highway.}
NZ-Highway is a traffic volume dataset curated by us from New Zealand highway monitoring records.
The raw data are collected from the New Zealand Transport Agency traffic monitoring system in the Wellington region.
After removing stations with extremely high missing rates, the processed dataset contains $61$ highway monitoring stations and $369{,}652$ time steps, spanning from January 1, 2013 to January 31, 2022, with a sampling interval of $15$ minutes.
The resulting dataset has an original missing rate of $31.20\%$, making it a suitable benchmark for evaluating spatio-temporal kriging under real-world incomplete observations. 

For NZ-Highway, we construct a distance-based adjacency matrix following~\cite{li2017diffusion}. 
Specifically, we first compute the geographical distance between each pair of sensors using the Haversine formula, and then convert the distance into an edge weight by a Gaussian kernel:
$
A_{ij} = \exp \left( - \left( \frac{\operatorname{dist}(v_i, v_j)}{\sigma} \right)^2 \right),
$
where $A_{ij}$ denotes the adjacency weight between sensors $v_i$ and $v_j$, $\operatorname{dist}(v_i, v_j)$ is the geographical distance between them, and $\sigma$ is the standard deviation of all non-zero pairwise distances.
To obtain a sparse and stable graph, we further apply a $k$-nearest-neighbor strategy with $k=10$.

\subsection{Imputation Methods}
\label{app:imputation_methods}

We provide a brief description of the seven imputation methods used in the imputation-first pipeline, covering simple filling strategies, traditional interpolation methods, and deep learning-based imputation models.

\textbf{Zero Filling}~\cite{du1994reduction}. 
Zero Filling replaces each missing value with zero in the normalized feature space, which corresponds to the mean-level value after standard normalization rather than the physical value zero.
It does not use temporal, spatial, or structural information and serves as a simple reference for evaluating.

\textbf{Mean Filling}~\cite{rothe1993mean}. 
Mean Filling replaces each missing value with the historical mean of the corresponding sensor.
This method preserves the node-level average scale but ignores short-term temporal variation and spatial dependency. It serves as a simple statistical baseline for completing block-missing observations.

\textbf{Linear Interpolation}~\cite{blu2004linear}. 
Linear Interpolation estimates missing values by linearly connecting the nearest available observations along the temporal dimension.
It is effective when missing values are randomly scattered and local temporal continuity is preserved.
However, under block missingness, the nearest valid observations may be far apart, making the interpolation less reliable for recovering local temporal trends.

\textbf{KNN Imputation}~\cite{zhang2012nearest}. 
KNN Imputation fills missing values using information from the nearest neighboring sensors.
It exploits spatial similarity among sensors and is therefore stronger than purely temporal filling methods when nearby sensors exhibit correlated dynamics.

\textbf{GRIN (ICLR 2022)}~\cite{cini2021multivariate}. 
GRIN is a graph recurrent imputation model designed for multivariate time series on graphs.
It combines graph neural networks with recurrent temporal modeling to propagate information across both spatial and temporal dimensions.
By explicitly exploiting graph structure, GRIN can recover missing sensor values using neighboring nodes and historical dynamics.

\textbf{SAITS (ESWA 2023)}~\cite{du2023saits}. 
SAITS is a self-attention-based imputation model for multivariate time series.
It uses Transformer-style attention to capture long-range temporal dependencies and feature interactions without recurrent computation.

\textbf{ImputeFormer (KDD 2024)}~\cite{nie2024imputeformer}. 
ImputeFormer is a Transformer-based spatio-temporal imputation model for incomplete traffic data.
It models temporal dynamics and spatial correlations jointly through attention-based representations, making it suitable for recovering missing values in sensor networks.

\subsection{Baselines}
\label{app:baselines}

We provide a brief description of the nine baseline models used in our experiments, organized by model category.

\textbf{DCRNN (ICLR 2018)}~\cite{li2017diffusion}. 
DCRNN models traffic dynamics by combining diffusion graph convolution with recurrent sequence modeling. 
It represents spatial dependency as bidirectional random walks on a directed graph and integrates the diffusion convolution into a gated recurrent unit, enabling temporal prediction over graph-structured sensor data. 
Although originally designed for forecasting, DCRNN can be adapted to kriging by using observed sensors as input nodes and evaluating predictions on held-out unobserved sensors.

\textbf{GMAN (AAAI 2020)}~\cite{zheng2020gman}. 
GMAN is an attention-based spatio-temporal forecasting model that jointly captures spatial correlations among sensors and temporal dependencies across time steps. 
It introduces spatial-temporal embedding and gated attention blocks to model dynamic dependencies without recurrent computation. 
In our setting, GMAN serves as a strong attention-based spatio-temporal backbone for assessing whether generic forecasting architectures can generalize to unobserved-node inference.

\textbf{STGCN (IJCAI 2018)}~\cite{yu2017spatio}. 
STGCN adopts a fully convolutional architecture for spatio-temporal graph modeling. 
It combines graph convolution for spatial dependency modeling with gated temporal convolution for sequence modeling, avoiding recurrent operations and enabling efficient training. 
We include STGCN as a classical spatio-temporal graph neural network baseline to evaluate the transferability of standard forecasting models to the kriging task.

\textbf{IGNNK (AAAI 2021)}~\cite{wu2021inductive}. 
IGNNK is one of the earliest graph neural network models for inductive spatio-temporal kriging. 
It formulates kriging as graph signal reconstruction and trains on sampled subgraphs so that the learned model can generalize to unseen nodes during inference. 
IGNNK uses graph convolutional operations to propagate information from observed nodes to unobserved nodes and reconstructs their temporal signals.

\textbf{INCREASE (WWW 2023)}~\cite{zheng2023increase}. 
INCREASE improves inductive kriging by introducing explicit relational candidates for unobserved nodes. 
Instead of relying only on direct geographic neighbors, it constructs multiple relation-aware neighbor sets and aggregates information from different spatial views. 
This design helps the model exploit richer contextual information when inferring signals at unseen locations.

\textbf{SATCN (2021)}~\cite{wu2021spatial}. 
SATCN, namely Spatial Aggregation and Temporal Convolution Networks, is designed for real-time spatio-temporal kriging. 
It combines spatial aggregation networks with temporal convolution networks, where the spatial module uses multiple aggregation operators to collect heterogeneous neighborhood information and the temporal module captures sequential dependencies. 
SATCN is included as a representative kriging-specific backbone with explicit spatial aggregation and temporal convolution.

\textbf{DarkFarseer (AAAI 2026)}~\cite{liang2025darkfarseer}. 
DarkFarseer is a recent inductive spatio-temporal kriging framework designed to improve virtual-node representation under sparse and noisy graph connections. 
It introduces hidden style enhancement to strengthen the representation of virtual nodes, contrastive learning to associate virtual-node patterns with regional components, and graph denoising to reduce the influence of unreliable edges. 
We include DarkFarseer as a recent strong baseline that explicitly targets representation quality and graph sparsity in ISK.

\textbf{KITS (AAAI 2025)}~\cite{xu2025kits}. 
KITS addresses the graph gap between training and inference in inductive kriging. 
Instead of only masking observed nodes during training, it incrementally inserts virtual nodes into the training graph so that the model can better match the inference scenario where unobserved nodes are present. 
It further pairs virtual nodes with similar observed nodes and constructs pseudo labels to provide more reliable supervision for virtual-node learning.

\textbf{STA-GANN (CIKM 2025)}~\cite{li2025sta}. 
STA-GANN is a recent graph adversarial framework for valid and generalizable spatio-temporal kriging. 
It models timestamp shifts with a decoupled phase module, updates spatial relations through dynamic metadata-aware graph modeling, and improves transferability to unknown sensors using adversarial learning. 
We include STA-GANN as an advanced baseline that focuses on dynamic spatial dependency modeling and generalization to unseen nodes.

\subsection{Experiment Details}
\label{app:experiment_details}

\paragraph{Evaluation protocol.}
We follow the kriging setting and split each dataset along the spatial dimension.
Nodes are divided into training, validation, and testing sets with a ratio of $70\%/10\%/20\%$.
Only training nodes are visible during optimization, while validation and testing nodes are held out as unseen locations.

\paragraph{Sampling protocol.}
For each sample, we randomly select a temporal window of length $h=24$ and construct a sampled subgraph.
The subgraph contains $N_{\mathrm{sub}}$ nodes, among which $u$ nodes are treated as unobserved targets.
During validation and testing, observed nodes are sampled from the training node set, while target nodes are sampled from the validation or testing node set.
We set $(N_{\mathrm{sub}},u)$ to $(110,10)$ for METR-LA, $(220,20)$ for PEMS-BAY, $(90,10)$ for NREL-AL, and $(30,5)$ for NZ-Highway.

\paragraph{Missingness.}
We simulate $20\%$ missingness on observed training nodes to evaluate robustness under incomplete observations.
Random missingness masks entries independently, block missingness masks connected node groups over continuous temporal intervals.
For NZ-Highway, we use its real missing observations and do not introduce additional simulated missingness.

\paragraph{Training hyperparameters.}
All models are trained with Adam for at most $200$ epochs, with batch size $64$ and initial learning rate $0.005$.
The learning rate decays by $0.5$ every $20$ epochs, and early stopping is applied based on validation MAE with patience $15$.
Each epoch contains $1000$ sampled training subgraphs, while validation and testing use $4000$ and $30000$ sampled subgraphs, respectively.

\paragraph{Backbone and plug-in settings.}
\UniSTOK is implemented as a non-intrusive plug-in framework.
It keeps the internal architecture of each backbone unchanged and only wraps the backbone with RSR, reliability-guided fusion, and RBC.
For most backbones, the hidden dimension is set to $64$.
For RSR, the reliability regulation strength is set to $\alpha=0.2$, and the attention residual is bounded with residual scale $0.05$.
For RBC, we use $K=100$ residual bins, initialize the soft-retrieval temperature as $\rho=2.0$, and optimize the calibration module with the balanced bin-wise MAE objective.

\paragraph{Evaluation metrics.}
We report MAE~\cite{willmott2005advantages}, RMSE~\cite{chai2014root}, and MAPE~\cite{hyndman2006another} after inverse normalization.

\paragraph{Computing resources.}
Experiments are conducted on one NVIDIA RTX 4090 GPU with $24$GB memory, $16$ vCPUs of Intel Xeon Gold 6430, and $120$GB system memory.
The software environment uses Python 3.12, PyTorch 2.5.1, CUDA 12.4, and Ubuntu 22.04.

\subsection{Parameter and Configuration Analysis}
\label{app:parameter_analysis}

This section reports the default configurations and trainable parameter counts of all evaluated backbones, followed by a parameter analysis of \UniSTOK.
All statistics are computed under the default setting with input and output window length $T=24$.
We report trainable parameters only.
For RBC, residual prototypes, bin statistics, and related residual tables are stored as non-trainable buffers and are therefore excluded from the trainable parameter count.

\paragraph{Backbone configurations.}

\begin{itemize} [leftmargin=0pt]
    \item \textbf{IGNNK.}
    We use hidden dimension $64$ and Chebyshev diffusion order $k=2$.
    The model contains three diffusion graph convolution layers with dimensions $24 \rightarrow 64 \rightarrow 64 \rightarrow 24$, resulting in $28{,}824$ trainable parameters.

    \item \textbf{STGCN.}
    We use hidden dimension $64$, bottleneck dimension $16$, Chebyshev order $k=3$, temporal kernel size $3$, and two spatial-temporal convolution blocks, resulting in $44{,}385$ trainable parameters.

    \item \textbf{DCRNN.}
    We use hidden dimension $64$, diffusion order $k=2$, two encoder layers, and two decoder layers, resulting in $297{,}281$ trainable parameters.

    \item \textbf{GMAN.}
    We use hidden dimension $64$, $8$ attention heads, three spatial-temporal attention blocks, and dropout rate $0.1$.
    Due to its stacked spatial-temporal attention and transform attention modules, GMAN has $540{,}993$ trainable parameters.

    \item \textbf{SATCN.}
    We use hidden dimension $32$, two SAN-TCN blocks, and temporal convolution kernel size $2$, resulting in $24{,}641$ trainable parameters.

    \item \textbf{INCREASE.}
    We use hidden dimension $64$ and top-$K=15$ relation-aware neighbor selection.
    It combines relation-aware aggregation with a context-aware GRU and contains $41{,}859$ trainable parameters.

    \item \textbf{KITS.}
    We use hidden dimension $64$, STGC order $1$, temporal stacking dimension $3$, and NCR weight $1.0$.
    Its input projection, three spatial-temporal graph convolution modules, hard-transfer module, and output projection contain $82{,}369$ trainable parameters.

    \item \textbf{STA-GANN.}
    We use hidden dimension $64$, two encoder-decoder layers, temporal kernel size $3$, dynamic adjacency top-$K=12$, and discriminator hidden dimension $64$.
    Its dynamic graph module, frequency-domain phase modeling, temporal encoder, and adversarial discriminator contain $73{,}948$ trainable parameters.

    \item \textbf{DarkFarseer.}
    We use hidden dimension $64$, two MPNN layers, BCC threshold $0.7$, contrastive weight $1.0$, and contrastive temperature $0.1$, resulting in $62{,}872$ trainable parameters.
\end{itemize}

\paragraph{Parameters of \UniSTOK.}
The additional trainable parameters of \UniSTOK come from three components: reliability-based signal regulation, adaptive pair fusion, and residual bias calibration.
Let $\Theta_{\mathrm{RSR}}$, $\Theta_{\mathrm{Fuse}}$, and $\Theta_{\mathrm{RBC}}$ denote their corresponding trainable parameters.
The total additional parameter count is
\begin{equation}
\Delta_{\UniSTOK}
=
|\Theta_{\mathrm{RSR}}|
+
|\Theta_{\mathrm{Fuse}}|
+
|\Theta_{\mathrm{RBC}}| .
\end{equation}
These parameters are shared across nodes and time steps, and therefore do not scale with the number of sensors $N$ or the number of training samples.
The trainable parts of \UniSTOK are implemented by lightweight affine projections and small MLPs.
For a two-layer MLP with input dimension $d_{\mathrm{in}}$, hidden dimension $d_h$, and output dimension $d_{\mathrm{out}}$, the parameter count is
$
P_{\mathrm{MLP}}(d_{\mathrm{in}}, d_h, d_{\mathrm{out}})
=
d_{\mathrm{in}}d_h + d_h
+
d_hd_{\mathrm{out}} + d_{\mathrm{out}} .
$
Thus, the plug-in overhead can be decomposed as
$
\Delta_{\UniSTOK}
=
P_{\mathrm{RSR}}
+
P_{\mathrm{gate}}
+
P_{\mathrm{fusion}}
+
P_{\mathrm{amp}},
$
where $P_{\mathrm{RSR}}$ is introduced by reliability-based signal regulation, $P_{\mathrm{gate}}$ and $P_{\mathrm{fusion}}$ are introduced by adaptive pair fusion, and $P_{\mathrm{amp}}$ is introduced by the correction-amplitude network in RBC.
The residual prototypes and bin-level statistics in RBC are non-trainable buffers and are not included in $\Delta_{\UniSTOK}$.
Under our default implementation, the total additional trainable parameter count is
$
\Delta_{\UniSTOK}
=
3{,}861 .
$

\subsection{Residual Prototype Estimation}
\label{app:residual_prototype}

This section details how RBC estimates residual prototypes from the training set.
The prototypes summarize value-dependent residual bias of the main predictor and are later used as fixed empirical statistics during residual bias calibration.
Validation data are used only for early stopping and model selection, not for residual estimation.

\paragraph{Training-time residual collection.}
Let $f_{\Theta}$ denote the main kriging predictor, including RSR, the shared backbone, and reliability-guided fusion.
During this stage, $f_{\Theta}$ is optimized by the main kriging objective.
After each epoch, the model is evaluated on the validation set to monitor convergence.
The best epoch is selected as
\begin{equation}
e^{\star}
=
\arg\min_{e}
\mathrm{MAE}^{(e)}_{\mathrm{val}},
\end{equation}
where $\mathrm{MAE}^{(e)}_{\mathrm{val}}$ is the validation MAE at epoch $e$.

Residual statistics are collected online using the predictions from the same forward pass used for the training loss.
For each valid training entry $(i,t)\in\Omega_{\mathrm{tr}}$, the residual at epoch $e$ is defined as
\begin{equation}
E^{(e)}_{i,t}
=
\hat{Y}^{(e)}_{i,t}
-
Y_{i,t},
\quad
(i,t)\in\Omega_{\mathrm{tr}} ,
\end{equation}
where $\Omega_{\mathrm{tr}}$ denotes the set of valid entries used for training supervision.
These residuals are used only to construct epoch-wise residual prototypes and do not introduce any additional optimization objective for the main predictor.

The prediction value range is divided into $K$ bins $\{\mathcal{B}_{k}\}_{k=1}^{K}$.
For each bin, we collect the training entries whose predictions fall into the corresponding value range:
\begin{equation}
\Omega^{(e)}_{k}
=
\left\{
(i,t)\in\Omega_{\mathrm{tr}}
\mid
\hat{Y}^{(e)}_{i,t}\in\mathcal{B}_{k}
\right\}.
\end{equation}
The epoch-wise residual prototype for the $k$-th bin is computed as
\begin{equation}
\bar{E}^{(e)}_{k}
=
\frac{
\sum_{(i,t)\in\Omega^{(e)}_{k}}
\left(
\hat{Y}^{(e)}_{i,t}-Y_{i,t}
\right)
}{
|\Omega^{(e)}_{k}|+\epsilon
},
\quad
k=1,\ldots,K .
\end{equation}
Thus, each epoch produces a residual table
\begin{equation}
\mathbf{R}^{(e)}
=
\left[
\bar{E}^{(e)}_{1},
\ldots,
\bar{E}^{(e)}_{K}
\right].
\end{equation}

\paragraph{Peak-weighted aggregation across epochs.}
Residual statistics from early epochs can be unstable because the predictor has not converged.
Conversely, using only the best epoch may make the prototype table sensitive to single-epoch noise.
Therefore, we aggregate the epoch-wise residual tables with a peak-weighted strategy centered at the validation-best epoch $e^{\star}$.
The weight of epoch $e$ is defined as
\begin{equation}
\omega_{e}
=
\beta^{|e-e^{\star}|},
\quad
0<\beta<1 .
\end{equation}
The final residual prototype of the $k$-th bin is obtained by
\begin{equation}
\bar{E}_{k}
=
\frac{
\sum_{e=1}^{E}
\omega_{e}
\bar{E}^{(e)}_{k}
}{
\sum_{e=1}^{E}
\omega_{e}
+\epsilon
},
\quad
k=1,\ldots,K .
\end{equation}
Equivalently, the final residual table is
\begin{equation}
\mathbf{R}
=
\frac{
\sum_{e=1}^{E}
\beta^{|e-e^{\star}|}
\mathbf{R}^{(e)}
}{
\sum_{e=1}^{E}
\beta^{|e-e^{\star}|}
+\epsilon
}.
\end{equation}
The resulting residual prototypes $\{\bar{E}_{k}\}_{k=1}^{K}$ summarize the empirical overestimation or underestimation tendency of the main predictor across different prediction ranges.
Importantly, $e^{\star}$ is determined by validation performance, whereas all residual values are estimated from the training set.
This separates model selection from residual estimation and prevents the calibrator from being driven by validation-set residuals.

\paragraph{Use in RBC.}
After residual prototypes are estimated, we reload the predictor parameters from the best epoch $e^{\star}$ and freeze the main predictor, including RSR, the shared backbone, and reliability-guided fusion.
The residual prototypes $\{\bar{E}_{k}\}_{k=1}^{K}$ are then treated as fixed empirical statistics.
RBC only learns how strongly these value-dependent residual corrections should be applied under different node-time contexts.
Given the frozen prediction $\hat{Y}_{i,t}$, RBC retrieves a differentiable prototype residual from $\{\bar{E}_{k}\}_{k=1}^{K}$ and applies a learnable correction to obtain $\hat{Y}^{\mathrm{cal}}_{i,t}$.

\begin{algorithm}[t]
\caption{Residual Prototype Estimation}
\label{alg:residual_prototype}
\small
\begin{algorithmic}[1]
\Require Training set $\mathcal{D}_{\mathrm{tr}}$, validation set $\mathcal{D}_{\mathrm{val}}$, main predictor $f_{\Theta}$, bins $\{\mathcal{B}_{k}\}_{k=1}^{K}$, peak factor $\beta$, patience $P$, maximum epochs $E$
\Ensure Best predictor parameters $\Theta^{\star}$, residual prototypes $\{\bar{E}_{k}\}_{k=1}^{K}$
\State Initialize $\Theta$; $m^{\star}\leftarrow\infty$; $e^{\star}\leftarrow 0$; $p\leftarrow 0$
\State Initialize epoch-wise residual table list $\mathcal{R}\leftarrow\emptyset$
\For{$e=1,\ldots,E$}
    \State Initialize residual sums $S_k\leftarrow 0$ and counts $C_k\leftarrow 0$ for $k=1,\ldots,K$
    \State Compute training predictions $\hat{\mathbf{Y}}^{(e)}=f_{\Theta}(\mathcal{D}_{\mathrm{tr}})$ and the main kriging loss $\mathcal{L}_{\mathrm{main}}$
    \For{each valid supervised entry $(i,t)\in\Omega_{\mathrm{tr}}$}
        \State Find bin index $k$ such that $\hat{Y}^{(e)}_{i,t}\in\mathcal{B}_{k}$
        \State $S_k\leftarrow S_k+\hat{Y}^{(e)}_{i,t}-Y_{i,t}$
        \State $C_k\leftarrow C_k+1$
    \EndFor
    \State Update $\Theta$ by minimizing $\mathcal{L}_{\mathrm{main}}$
    \For{$k=1,\ldots,K$}
        \State $\bar{E}^{(e)}_{k}\leftarrow S_k/(C_k+\epsilon)$
    \EndFor
    \State Append $\mathbf{R}^{(e)}=[\bar{E}^{(e)}_{1},\ldots,\bar{E}^{(e)}_{K}]$ to $\mathcal{R}$
    \State Evaluate $f_{\Theta}$ on $\mathcal{D}_{\mathrm{val}}$ and obtain validation score $m_e$
    \If{$m_e<m^{\star}$}
        \State $m^{\star}\leftarrow m_e$; $e^{\star}\leftarrow e$; save $\Theta^{\star}\leftarrow\Theta$; $p\leftarrow 0$
    \Else
        \State $p\leftarrow p+1$
    \EndIf
    \If{$p\geq P$}
        \State \textbf{break}
    \EndIf
\EndFor
\State \textbf{Peak-weighted aggregation}
\For{$k=1,\ldots,K$}
    \State $\bar{E}_{k}\leftarrow
    \frac{
    \sum_{e}\beta^{|e-e^{\star}|}\bar{E}^{(e)}_{k}
    }{
    \sum_{e}\beta^{|e-e^{\star}|}+\epsilon
    }$
\EndFor
\State \Return $\Theta^{\star}$, $\{\bar{E}_{k}\}_{k=1}^{K}$
\end{algorithmic}
\end{algorithm}

\newpage

\section{Theoretical Analysis}
\label{app:theoretical}

In this section, we provide rigorous mathematical proofs for the two core theoretical results stated in the main text. 
Theorem~\ref{thm:reliability_bound} establishes the foundational properties of the reliability field used in RSR, and Theorem~\ref{thm:soft_retrieval} formalizes the statistical consistency of the soft residual retrieval mechanism in RBC.

Throughout the proofs, we adopt a consistent color convention to facilitate tracking of key variables across derivation steps: {\color{OrangeVar}\textbf{orange}} highlights the \textit{temporal} reliability component and its derived quantities (e.g., $\ovar{r^T_{i,\tau}}$, $\ovar{\bar{E}_k}$, $\ovar{\gamma_{i,\tau}}$), which originate from the observation mask and missing span structure; {\color{BlueVar}\textbf{blue}} highlights the \textit{spatial or output} quantities (e.g., $\bvar{r^S_{i,\tau}}$, $\bvar{R_{i,\tau}}$, $\bvar{\beta(v)}$, $\bvar{E^p_{i,\tau}}$), which aggregate neighborhood or prediction-level information. This two-color scheme makes it straightforward to trace how temporal evidence and spatial evidence interact and propagate through each inequality chain.

\subsection{Proof of Theorem~\ref{thm:reliability_bound}: Boundedness and Monotonicity of the Reliability Field}
\label{app:proof_reliability}

\begin{proof}
We prove each claim in order.

\paragraph{\textit{Claim 1: Boundedness.}}
We begin by establishing the range of the two component reliability scores. By definition, the temporal reliability is
\begin{equation}
\ovar{r^{T}_{i,\tau}}
=
1-
\frac{\log(1+\delta_{i,\tau})}{\log(1+t)}.
\end{equation}
Since $\delta_{i,\tau}$ represents the minimum bidirectional distance to the nearest observed entry, we have $\delta_{i,\tau}\in\{0,1,\ldots,t-1\}$. When $\delta_{i,\tau}=0$ (the entry itself is observed), $\ovar{r^{T}_{i,\tau}}=1$. When $\delta_{i,\tau}$ achieves its maximum value $t-1$ (the entry is at the farthest point from any observation within the window), we have
\begin{equation}
\ovar{r^{T}_{i,\tau}}
=
1-\frac{\log(t)}{\log(1+t)}
=
\frac{\log(1+t)-\log(t)}{\log(1+t)}
=
\frac{\log(1+1/t)}{\log(1+t)}
>0.
\end{equation}
Therefore $\ovar{r^{T}_{i,\tau}}\in(0,1]$. More precisely, since $\log(1+\delta_{i,\tau})\leq \log(1+t-1)=\log(t)<\log(1+t)$, we always have $\ovar{r^{T}_{i,\tau}}>0$. Combined with the upper bound at $\delta_{i,\tau}=0$, we obtain $\ovar{r^{T}_{i,\tau}}\in(0,1]$.

For the spatial reliability, by definition:
\begin{equation}
\bvar{r^{S}_{i,\tau}}
=
\frac{
\sum_{j=1}^{n+u} A_{ij}M_{j,\tau}
}{
\sum_{j=1}^{n+u} A_{ij}+\epsilon
}.
\end{equation}
Since $A_{ij}\geq 0$, $M_{j,\tau}\in\{0,1\}$, and $\epsilon>0$, the numerator satisfies $0\leq \sum_j A_{ij}M_{j,\tau}\leq \sum_j A_{ij}$. Thus $\bvar{r^{S}_{i,\tau}}\in\bigl[0,\, \frac{\sum_j A_{ij}}{\sum_j A_{ij}+\epsilon}\bigr)\subset [0,1)$. In the limiting case where all neighbors are observed, $\bvar{r^{S}_{i,\tau}}$ approaches but does not reach 1 due to $\epsilon$. For practical purposes with small $\epsilon$, we treat $\bvar{r^{S}_{i,\tau}}\in[0,1]$.

Now consider the combined reliability field $R_{i,\tau}=\sqrt{\ovar{r^{T}_{i,\tau}}\cdot\bvar{r^{S}_{i,\tau}}+\epsilon}$. Since $\ovar{r^{T}_{i,\tau}}\in(0,1]$ and $\bvar{r^{S}_{i,\tau}}\in[0,1]$, the product $\ovar{r^{T}_{i,\tau}}\cdot\bvar{r^{S}_{i,\tau}}\in[0,1]$. Adding $\epsilon>0$ and taking the square root:
\begin{equation}
\sqrt{\epsilon}
\;\leq\;
R_{i,\tau}
=
\sqrt{\ovar{r^{T}_{i,\tau}}\cdot\bvar{r^{S}_{i,\tau}}+\epsilon}
\;\leq\;
\sqrt{1+\epsilon}.
\end{equation}
The lower bound $\sqrt{\epsilon}$ is achieved when $\bvar{r^{S}_{i,\tau}}=0$ (no observed neighbors), and the upper bound $\sqrt{1+\epsilon}$ is approached when both $\ovar{r^{T}_{i,\tau}}=1$ and $\bvar{r^{S}_{i,\tau}}=1$. This confirms that $R_{i,\tau}$ is strictly positive and bounded, ensuring numerical stability in all downstream computations.

\paragraph{\textit{Claim 2: Monotonicity.}}
We show that $R_{i,\tau}$ is strictly increasing in each component. Define $f(a,b)=\sqrt{ab+\epsilon}$ for $a\in(0,1]$ and $b\in[0,1]$. The partial derivative with respect to $a$ is:
\begin{equation}
\frac{\partial f}{\partial a}
=
\frac{b}{2\sqrt{ab+\epsilon}}.
\end{equation}
For any $b>0$, this derivative is strictly positive since $\sqrt{ab+\epsilon}>0$. Similarly:
\begin{equation}
\frac{\partial f}{\partial b}
=
\frac{a}{2\sqrt{ab+\epsilon}}
>0
\end{equation}
for all $a>0$. Since $\ovar{r^{T}_{i,\tau}}>0$ always holds (as shown in Claim 1), $R_{i,\tau}$ is strictly increasing in $\bvar{r^{S}_{i,\tau}}$ whenever $\bvar{r^{S}_{i,\tau}}>0$. Conversely, $R_{i,\tau}$ is strictly increasing in $\ovar{r^{T}_{i,\tau}}$ whenever $\bvar{r^{S}_{i,\tau}}>0$.

This monotonicity property is essential for the regulation mechanism: it guarantees that entries with stronger temporal continuity \emph{and} spatial support always receive higher reliability scores, ensuring a consistent ordering that the downstream regulation coefficient $\gamma_{i,\tau}$ can faithfully amplify.

\paragraph{\textit{Claim 3: Sensitivity decay under block missingness.}}
Consider a contiguous block of missing values of length $L$ centered at time $\tau$ for node $i$. The nearest observed entry is at distance at least $\lfloor L/2\rfloor$ from $\tau$, so $\delta_{i,\tau}\geq \lfloor L/2\rfloor$. Since $\ovar{r^{T}_{i,\tau}}$ is monotonically decreasing in $\delta_{i,\tau}$ (the logarithm is increasing), we obtain:
\begin{equation}
\ovar{r^{T}_{i,\tau}}
=
1-\frac{\log(1+\delta_{i,\tau})}{\log(1+t)}
\;\leq\;
1-\frac{\log(1+\lfloor L/2\rfloor)}{\log(1+t)}.
\end{equation}
As $L\to t$ (the block spans the entire window), $\lfloor L/2\rfloor \to \lfloor t/2\rfloor$, and:
\begin{equation}
\ovar{r^{T}_{i,\tau}}
\;\leq\;
1-\frac{\log(1+\lfloor t/2\rfloor)}{\log(1+t)}
\;\xrightarrow{t\to\infty}\;
1-\frac{\log(t/2)}{\log(t)}
=
\frac{\log 2}{\log t}
\;\to\; 0.
\end{equation}
For typical window sizes (e.g., $t=12$ or $t=24$), this yields $\ovar{r^{T}_{i,\tau}}\leq 0.32$ or $\ovar{r^{T}_{i,\tau}}\leq 0.29$, respectively. This demonstrates that the temporal reliability effectively penalizes entries embedded in long missing blocks, with the penalty growing logarithmically with block length.

Consequently, the combined reliability $R_{i,\tau}=\sqrt{\ovar{r^{T}_{i,\tau}}\cdot\bvar{r^{S}_{i,\tau}}+\epsilon}$ also decays, since $R_{i,\tau}\leq \sqrt{\ovar{r^{T}_{i,\tau}}+\epsilon}$ (using $\bvar{r^{S}_{i,\tau}}\leq 1$). Under severe block missingness where both temporal and spatial evidence are degraded simultaneously, the reliability field approaches its minimum $\sqrt{\epsilon}$, triggering maximal attenuation in the regulation coefficient $\gamma_{i,\tau}$.
\end{proof}

\begin{remark}
The logarithmic decay rate $\log(1+\delta)/\log(1+t)$ is a deliberate design choice. A linear decay $\delta/t$ would penalize entries too aggressively near block boundaries, while a sub-logarithmic decay (e.g., $\sqrt{\delta/t}$) would be insufficiently sensitive to long blocks. The logarithmic form provides a balanced sensitivity profile: rapid initial decay for short missing spans (where interpolation is still feasible) and gradual saturation for very long blocks (where the entry is essentially uninformative regardless of exact block length).
\end{remark}

\subsection{Proof of Theorem~\ref{thm:soft_retrieval}: Consistency of Soft Residual Retrieval}
\label{app:proof_src}

We now prove that the soft residual retrieval in RBC converges to the true conditional bias function as the number of training samples grows and the bin resolution increases. The proof proceeds in three stages: we first establish the almost sure convergence of each empirical bin prototype to its population counterpart, then analyze the approximation error introduced by the finite bin resolution, and finally combine both results to obtain convergence in probability of the soft-retrieved residual to the true conditional bias.

\begin{proof}

\noindent\textbf{\textit{Stage 1: Convergence of empirical bin prototypes.}}

Let $\mathcal{B}_k = [c_k - h, c_k + h)$ denote the $k$-th bin of width $2h$ centered at $c_k$, where $h = \max_k |c_{k+1} - c_k|/2$ controls the bin resolution. For a fixed bin $\mathcal{B}_k$, define the population-level conditional bias:
\begin{equation}
\beta_k
=
\mathbb{E}\!\left[\,\hat{Y}_{i,\tau} - Y_{i,\tau} \;\middle|\; \hat{Y}_{i,\tau} \in \mathcal{B}_k\right]
=
\frac{\mathbb{E}\!\left[\,(\hat{Y}_{i,\tau} - Y_{i,\tau})\,\mathbb{I}(\hat{Y}_{i,\tau}\in\mathcal{B}_k)\right]}{\mathbb{P}(\hat{Y}_{i,\tau}\in\mathcal{B}_k)}.
\end{equation}
The empirical prototype $\ovar{\bar{E}_k}$ computed over $N_{\mathrm{tr}}$ training samples is:
\begin{equation}
\ovar{\bar{E}_k}
=
\frac{
\sum_{i,\tau} \mathbb{I}(\hat{Y}_{i,\tau}\in \mathcal{B}_k)\,(\hat{Y}_{i,\tau} - Y_{i,\tau})
}{
\sum_{i,\tau} \mathbb{I}(\hat{Y}_{i,\tau}\in \mathcal{B}_k) + \epsilon
}.
\end{equation}
Let $N_k = \sum_{i,\tau}\mathbb{I}(\hat{Y}_{i,\tau}\in\mathcal{B}_k)$ denote the number of training samples falling into bin $\mathcal{B}_k$. Define the individual residual $e_{i,\tau} = \hat{Y}_{i,\tau} - Y_{i,\tau}$ and write $\ovar{\bar{E}_k} = \frac{1}{N_k+\epsilon}\sum_{i,\tau}\mathbb{I}(\hat{Y}_{i,\tau}\in\mathcal{B}_k)\,e_{i,\tau}$.

We apply the Strong Law of Large Numbers (SLLN) to the numerator and denominator separately. Provided that $\mathbb{E}[|e_{i,\tau}|] < \infty$ (which holds under any finite-variance prediction model) and that $\mathbb{P}(\hat{Y}_{i,\tau}\in\mathcal{B}_k) = p_k > 0$, as $N_{\mathrm{tr}}\to\infty$ we have:
\begin{equation}
\frac{1}{N_{\mathrm{tr}}}\sum_{i,\tau}\mathbb{I}(\hat{Y}_{i,\tau}\in\mathcal{B}_k)\,e_{i,\tau}
\;\xrightarrow{a.s.}\;
\mathbb{E}\!\left[e_{i,\tau}\,\mathbb{I}(\hat{Y}_{i,\tau}\in\mathcal{B}_k)\right],
\qquad
\frac{N_k}{N_{\mathrm{tr}}}
\;\xrightarrow{a.s.}\;
p_k.
\end{equation}
Since $p_k > 0$ by assumption, the continuous mapping theorem applied to the ratio gives:
\begin{equation}
\ovar{\bar{E}_k}
\;\xrightarrow{a.s.}\;
\frac{\mathbb{E}\!\left[e_{i,\tau}\,\mathbb{I}(\hat{Y}_{i,\tau}\in\mathcal{B}_k)\right]}{p_k}
=
\beta_k,
\end{equation}
where the last equality follows from the definition of conditional expectation. This establishes that each empirical prototype $\ovar{\bar{E}_k}$ is a strongly consistent estimator of the bin-level population bias $\beta_k$.

The almost sure convergence here is the key statistical guarantee: it means that for any fixed bin, the empirical prototype computed from training data will eventually be arbitrarily close to the true conditional bias within that bin, regardless of the specific realization of the training set. This justifies the use of $\ovar{\bar{E}_k}$ as a reliable proxy for the systematic prediction error in the $k$-th value range.

\noindent\textbf{\textit{Stage 2: Approximation error of the discretized bias.}}

The bin-level prototype $\beta_k$ approximates the true pointwise conditional bias function $\bvar{\beta(v)} = \mathbb{E}[e_{i,\tau} \mid \hat{Y}_{i,\tau} = v]$ only up to the resolution of the binning. We now quantify this approximation error.

Assume that $\bvar{\beta(v)}$ is Lipschitz continuous with constant $\Lambda > 0$, i.e., $|\bvar{\beta(v)} - \bvar{\beta(v')}| \leq \Lambda|v - v'|$ for all $v, v'$ in the prediction range. This is a mild regularity condition that holds whenever the conditional bias varies smoothly with the predicted value, which is expected in practice since the value-dependent bias of a trained neural network is a smooth function of its output.

For any $v \in \mathcal{B}_k$, the distance from $v$ to the bin center $c_k$ is at most $h$. Therefore:
\begin{equation}
|\beta_k - \bvar{\beta(v)}|
=
\left|\mathbb{E}[\bvar{\beta(\hat{Y}_{i,\tau})} \mid \hat{Y}_{i,\tau}\in\mathcal{B}_k] - \bvar{\beta(v)}\right|
\leq
\mathbb{E}\!\left[|\bvar{\beta(\hat{Y}_{i,\tau})} - \bvar{\beta(v)}| \;\middle|\; \hat{Y}_{i,\tau}\in\mathcal{B}_k\right]
\leq
\Lambda h,
\end{equation}
where the first inequality uses Jensen's inequality for the absolute value, and the second uses the Lipschitz condition together with $|\hat{Y}_{i,\tau} - v| \leq 2h$ for $\hat{Y}_{i,\tau}, v \in \mathcal{B}_k$.

This bound shows that the discretization error is $O(h)$: as the bin width $h \to 0$ (equivalently, $K \to \infty$ with $\max_k|c_{k+1}-c_k| \to 0$), the bin prototype $\beta_k$ converges uniformly to the true pointwise bias $\bvar{\beta(v)}$ for all $v$ in the prediction range. The Lipschitz assumption is the minimal regularity needed to guarantee this uniform convergence.

\noindent\textbf{\textit{Stage 3: Convergence of the soft-retrieved residual.}}

We now combine the two stages to analyze the soft-retrieved prototype residual:
\begin{equation}
\bvar{E^{p}_{i,\tau}}
=
\sum_{k=1}^{K}
w_{i,\tau,k}\,\ovar{\bar{E}_k},
\qquad
w_{i,\tau,k}
=
\frac{
\exp\!\left(-\frac{(\hat{Y}_{i,\tau}-c_k)^2}{2\sigma_c^2}\right)
}{
\sum_{\ell=1}^{K}\exp\!\left(-\frac{(\hat{Y}_{i,\tau}-c_\ell)^2}{2\sigma_c^2}\right)
}.
\end{equation}
The weights $\{w_{i,\tau,k}\}$ form a probability simplex (i.e., $w_{i,\tau,k}\geq 0$ and $\sum_k w_{i,\tau,k}=1$), so $\bvar{E^{p}_{i,\tau}}$ is a convex combination of the empirical prototypes. We decompose the total error as:
\begin{equation}
\left|\bvar{E^{p}_{i,\tau}} - \bvar{\beta(\hat{Y}_{i,\tau})}\right|
\leq
\underbrace{\left|\sum_{k=1}^{K} w_{i,\tau,k}\,\ovar{\bar{E}_k} - \sum_{k=1}^{K} w_{i,\tau,k}\,\beta_k\right|}_{\text{Term I: estimation error}}
+
\underbrace{\left|\sum_{k=1}^{K} w_{i,\tau,k}\,\beta_k - \bvar{\beta(\hat{Y}_{i,\tau})}\right|}_{\text{Term II: approximation error}}.
\end{equation}
This triangle inequality decomposition separates the statistical error (Term I, due to finite training data) from the structural error (Term II, due to finite bin resolution), allowing us to control each independently.

\textbf{Bounding Term I.}
Since $\{w_{i,\tau,k}\}$ is a probability simplex and $\sum_k w_{i,\tau,k} = 1$:
\begin{equation}
\text{Term I}
=
\left|\sum_{k=1}^{K} w_{i,\tau,k}(\ovar{\bar{E}_k} - \beta_k)\right|
\leq
\sum_{k=1}^{K} w_{i,\tau,k}\,|\ovar{\bar{E}_k} - \beta_k|
\leq
\max_{k}\,|\ovar{\bar{E}_k} - \beta_k|.
\end{equation}
From Stage 1, $\ovar{\bar{E}_k} \xrightarrow{a.s.} \beta_k$ for each $k$. Since $K$ is finite, the maximum over finitely many almost-sure convergences is also almost sure:
\begin{equation}
\max_{k=1,\ldots,K}\,|\ovar{\bar{E}_k} - \beta_k|
\;\xrightarrow{a.s.}\;
0
\quad \text{as } N_{\mathrm{tr}}\to\infty.
\end{equation}
In particular, this convergence holds in probability, so Term I $\xrightarrow{p} 0$.

\textbf{Bounding Term II.}
The soft weights $w_{i,\tau,k}$ concentrate around the bin $k^* = \arg\min_k |\hat{Y}_{i,\tau} - c_k|$ as $\sigma_c \to 0$. Formally, as $\sigma_c \to 0$, the Gaussian kernel degenerates to a point mass at $k^*$, so $w_{i,\tau,k} \to \mathbb{I}(k = k^*)$. Therefore:
\begin{equation}
\sum_{k=1}^{K} w_{i,\tau,k}\,\beta_k
\;\to\;
\beta_{k^*}
\quad \text{as } \sigma_c \to 0.
\end{equation}
Since $\hat{Y}_{i,\tau} \in \mathcal{B}_{k^*}$ by definition of $k^*$, the approximation bound from Stage 2 gives $|\beta_{k^*} - \bvar{\beta(\hat{Y}_{i,\tau})}| \leq \Lambda h$. As $h \to 0$ (i.e., $K \to \infty$ with $\max_k|c_{k+1}-c_k| \to 0$), we obtain Term II $\to 0$.

\textbf{Combining both bounds.}
Taking $N_{\mathrm{tr}} \to \infty$ and simultaneously $\sigma_c \to 0$, $K \to \infty$ with $h \to 0$, both terms vanish:
\begin{equation}
\left|\bvar{E^{p}_{i,\tau}} - \bvar{\beta(\hat{Y}_{i,\tau})}\right|
\leq
\max_{k}\,|\ovar{\bar{E}_k} - \beta_k|
+
\Lambda h
+
o(1)
\;\xrightarrow{p}\;
0,
\end{equation}
where the $o(1)$ term accounts for the residual soft-weight spread at finite $\sigma_c$, which vanishes as $\sigma_c \to 0$. This establishes:
\begin{equation}
\bvar{E^{p}_{i,\tau}}
\;\xrightarrow{p}\;
\bvar{\beta(\hat{Y}_{i,\tau})}.
\end{equation}

The convergence result directly implies that the calibrated output $\hat{Y}^{cal}_{i,\tau} = \hat{Y}_{i,\tau} - a_{i,\tau}\bvar{E^{p}_{i,\tau}}$ asymptotically removes the value-dependent bias when $a_{i,\tau} \to 1$. Specifically, the residual bias after calibration satisfies:
\begin{equation}
\mathbb{E}\!\left[\hat{Y}^{cal}_{i,\tau} - Y_{i,\tau} \;\middle|\; \hat{Y}_{i,\tau} = v\right]
=
\bvar{\beta(v)} - a_{i,\tau}\,\bvar{E^{p}_{i,\tau}}
\;\xrightarrow{p}\;
\bvar{\beta(v)} - \bvar{\beta(v)}
=
0,
\end{equation}
confirming that the RBC module is an asymptotically unbiased post-hoc calibrator. The context-aware amplitude $a_{i,\tau}$ learned by $\Phi_a$ further adapts this correction to local node and temporal statistics, providing entry-wise refinement beyond the global bin-level estimate.
\end{proof}

\newpage

\section{Additional Results}

\subsection{Analysis on Block Missingness}
\label{app:block_missing_analysis}

To support the motivation in the introduction, we provide additional evidence on the impact of different missing patterns.
We compare random missingness and block missingness under the same missing rate and report the relative MAE degradation over the complete-observation setting.

As shown in Fig.~\ref{fig:block_missing_appendix}, block missingness consistently causes larger degradation than random missingness across different backbones and datasets.
This trend is especially clear on NREL, where block missingness increases MAE by $26.9\%$ for IGNNK and $29.3\%$ for KITS, while random missingness has only a small effect or even leads to slight negative degradation.
Although the absolute degradation varies across architectures, the relative gap between block and random missingness remains consistent.

These results indicate that the difficulty does not come from a specific model design.
Instead, block missingness removes temporally contiguous observations and breaks local temporal trends, making the observed sensors less reliable as spatial anchors for inductive kriging.
This supports our focus on spatio-temporal kriging with block-missing observations on observed sensors.

\begin{figure}
    \centering
    \includegraphics[width=\textwidth]{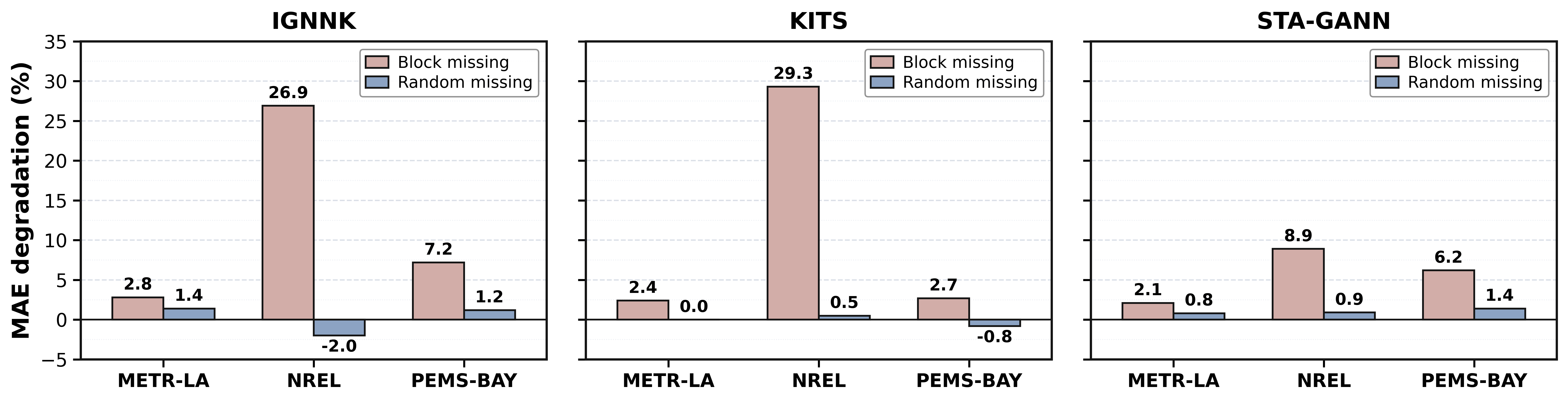}
    \caption{
    Relative MAE degradation under random and block missingness.
    }
    \label{fig:block_missing_appendix}
\end{figure}

\subsection{Objective Mismatch in the Imputation-first Pipeline}
\label{app:imputation_kriging_mismatch}

\paragraph{Experimental setup.}
We analyze whether better reconstruction of block-missing observations leads to better downstream kriging.
Given observed sensors with block missingness, we first apply an imputation method to complete the observed sensor sequences, and then feed the completed observations into an ISK model for kriging on unobserved sensors. We evaluate seven imputation methods, together with six ISK baseline.

\paragraph{Stage-1 imputation accuracy.}

Stage-1 imputation performance is visualized in Fig.~\ref{fig:stage1_imputation_mae}.
The results show that advanced deep imputation models indeed achieve substantially lower reconstruction errors than simple filling or interpolation methods under block missingness.
On METR-LA, SAITS achieves the lowest imputation MAE.
On PEMS-BAY, SAITS and ImputeFormer also achieve the best reconstruction accuracy.
These results confirm that the failure of the imputation-first pipeline is not because the imputation models are weak at the reconstruction task.

\begin{figure}
    \centering
    \includegraphics[width=\textwidth]{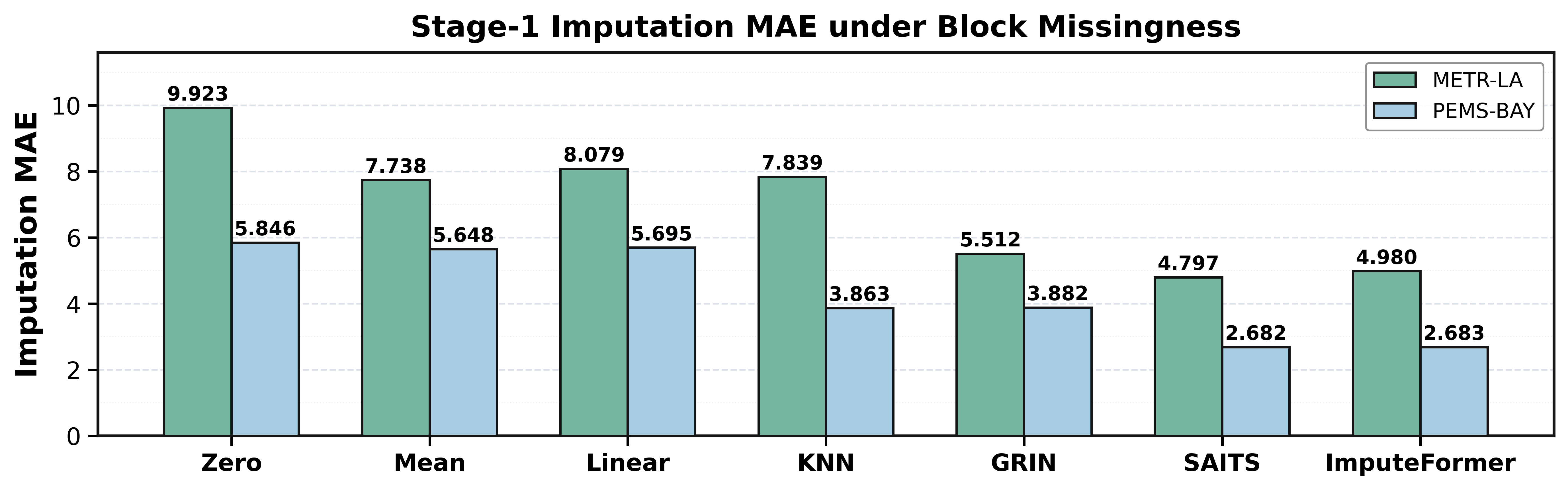}
    \caption{
    Stage-1 imputation MAE on observed sensors under block missingness.
    }
    \label{fig:stage1_imputation_mae}
\end{figure}

\paragraph{Stage-2 kriging performance.}
We then examine whether the Stage-1 reconstruction advantage transfers to downstream kriging.
For each dataset and backbone, Table~\ref{tab:stage2_kriging_summary} reports the best and worst imputation method according to the final kriging MAE.
The results show two clear patterns.
First, the best imputation method varies across backbones and datasets, indicating that there is no universally optimal imputation strategy for downstream kriging.
Second, strong reconstruction models do not consistently lead to the best kriging performance.

\begin{table*}[t]
\centering
\caption{Stage-2 kriging performance of the imputation-first pipeline under block missingness. We report the best and worst imputation method for each dataset-backbone pair.}
\label{tab:stage2_kriging_summary}
\small
\resizebox{\textwidth}{!}{
\begin{tabular}{llccccc}
\toprule
Dataset & Backbone & Best Method & Best MAE & Worst Method & Worst MAE \\
\midrule
\multirow{6}{*}{METR-LA}
& IGNNK & Linear & \textbf{6.230} & KNN & 6.434 \\
& KITS & Zero & \textbf{6.548} & ImputeFormer & 6.781 \\
& STGCN & Zero & \textbf{6.340} & Mean & 6.545 \\
& STA-GANN & Zero & \textbf{6.229} & ImputeFormer & 6.641 \\
& DARKFARSEER & Mean & \textbf{6.375} & Zero & 6.793 \\
& INCREASE-SP & SAITS & \textbf{6.201} & Zero & 6.392 \\
\midrule
\multirow{6}{*}{PEMS-BAY}
& IGNNK & Linear & \textbf{3.856} & ImputeFormer & 3.887 \\
& KITS & KNN & \textbf{3.962} & SAITS & 4.098 \\
& STGCN & Zero & \textbf{3.684} & Linear & 3.848 \\
& STA-GANN & GRIN & \textbf{3.788} & Zero & 3.840 \\
& DARKFARSEER & SAITS & \textbf{4.287} & KNN & 4.522 \\
& INCREASE-SP & Mean & \textbf{3.733} & Linear & 3.925 \\
\bottomrule
\end{tabular}
}
\end{table*}

\paragraph{Rank correlation between imputation and kriging.}
To further examine the alignment between Stage-1 imputation and Stage-2 kriging, we rank the imputation methods by their reconstruction MAE and compare them with their downstream kriging ranks under different ISK backbones.

As shown in Fig.~\ref{fig:imputation_kriging_rank_heatmap}, lower reconstruction error does not consistently correspond to better downstream kriging rank.
The average Spearman rank correlations are only -0.22 on METR-LA and -0.05 on PEMS-BAY, indicating weak or even negative alignment between the two stages.
This further confirms that reconstruction MAE on observed sensors is not a reliable surrogate objective for downstream inductive spatial kriging.

\begin{figure}
    \centering
    \includegraphics[width=\textwidth]{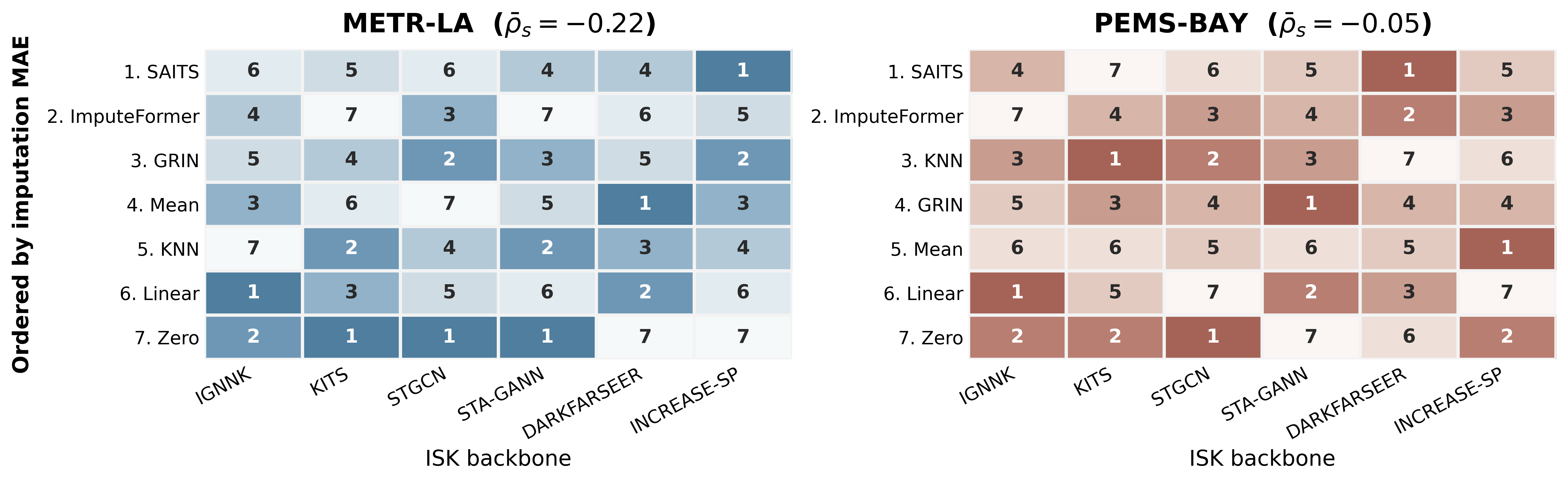}
    \caption{
    Comparison between Stage-1 reconstruction ranking and Stage-2 kriging ranking.
    For each dataset, rows list the imputation methods ordered by reconstruction MAE, and each cell shows the downstream kriging rank under a specific ISK backbone.
    }
    \label{fig:imputation_kriging_rank_heatmap}
\end{figure}

\paragraph{Discussion.}
These results reveal an objective mismatch in the imputation-first pipeline.
Although deep imputation models can substantially improve numerical reconstruction on observed sensors, this advantage does not consistently transfer to downstream kriging.
A likely reason is that block-missing observations are not merely missing values to be repaired; they also change the reliability of observed sensors as spatial anchors.
Once the imputed sequences are treated as fully observed inputs, the downstream ISK backbone cannot distinguish reliable measurements from reconstructed values, and may propagate imputation-induced bias to unobserved sensors.
This motivates modeling missingness and spatial inference jointly rather than optimizing imputation as an isolated preprocessing step.

\subsection{Bias Diagnosis under Block-wise Incomplete Observations}
\label{app:bias_diagnosis}

\paragraph{Bias metrics.}
We define the signed prediction error as
$
e_{i,t}=\hat{x}_{i,t}-x_{i,t},
$
where $e_{i,t}>0$ indicates overestimation and $e_{i,t}<0$ indicates underestimation.
For Stage-1 imputation, the evaluation set $\Omega_{\mathrm{imp}}$ contains the simulated missing entries on observed nodes with available ground truth.
For kriging, the evaluation set $\Omega_{\mathrm{kri}}$ contains the prediction entries on unobserved nodes.
The global mean error is computed as
$
\mathrm{GME}
=
\frac{1}{|\Omega|}
\sum_{(i,t)\in\Omega}
e_{i,t}.
$
To measure value-dependent bias, we partition the ground-truth values into quantile-based bins $\{\mathcal{B}_q\}_{q=1}^{Q}$ and compute the conditional bias as
$
\mathrm{CB}_{q}
=
\frac{1}{|\Omega_q|}
\sum_{(i,t)\in\Omega_q}
e_{i,t},
\quad
\Omega_q
=
\{(i,t)\in\Omega \mid x_{i,t}\in\mathcal{B}_q\}.
$

\paragraph{imputation bias.}
Table~\ref{tab:stage1_bias} reports the signed bias of different imputation methods on the simulated missing entries of observed nodes.
Although the global mean error is small for all methods, the conditional bias is much more evident.
In particular, all methods overestimate low-value regions and underestimate high-value regions, showing that imputation errors are value-dependent rather than uniformly distributed.

\begin{table}[t]
\centering
\caption{
Stage-1 imputation bias on simulated missing entries of observed nodes.
Positive values indicate overestimation, and negative values indicate underestimation.
}
\label{tab:stage1_bias}
\small
\begin{tabular}{lccccc}
\toprule
Method & GME & MAE & Low CB & Mid CB & High CB \\
\midrule
Zero & +0.37 & 9.92 & +15.33 & -4.81 & -9.40 \\
Mean & -0.18 & 7.74 & +9.60 & -4.51 & -5.63 \\
Linear & -0.38 & 8.08 & +7.49 & -4.25 & -4.38 \\
KNN & +0.67 & 7.84 & +9.28 & -2.05 & -5.21 \\
GRIN & +1.80 & 5.51 & +7.97 & -0.40 & -2.17 \\
SAITS & +1.38 & 4.80 & +7.08 & -1.04 & -1.89 \\
ImputeFormer & +1.27 & 4.98 & +6.32 & -0.98 & -1.53 \\
\bottomrule
\end{tabular}
\end{table}

\paragraph{Bias after kriging.}
We further examine whether the value-dependent bias is removed or propagated after spatial inference.
To separate the intrinsic bias of the kriging backbone from the additional bias induced by incomplete observations, we also report a no-missing reference setting, where the kriging model directly uses the original observed values without imputation.
Table~\ref{tab:stage2_bias_summary} summarizes the kriging bias under this reference setting and under block-wise incomplete observations with different imputers.

The no-missing reference already exhibits mild value-dependent bias, indicating that kriging itself can produce biased estimates across different value regimes.
However, the bias becomes substantially larger under block-wise incomplete observations.
In particular, the low-value overestimation and high-value underestimation are consistently preserved after kriging.
Moreover, although stronger imputers have smaller Stage-1 conditional bias, their downstream kriging bias becomes comparable to that of simpler imputers.
This suggests that spatial inference does not merely inherit imputation errors locally, but propagates and reshapes value-dependent bias through the kriging backbone.

\begin{table}[t]
\centering
\caption{
Kriging bias under the no-missing reference and block-wise incomplete observations.
The no-missing setting uses original observed values without imputation, while the imputation setting reports the range across seven imputation methods.
Positive values indicate overestimation, and negative values indicate underestimation.
}
\label{tab:stage2_bias_summary}
\small
\setlength{\tabcolsep}{5.5pt}
\renewcommand{\arraystretch}{1.15}
\begin{tabular}{lcccc}
\toprule
Backbone & Setting & GME & Low-value CB & High-value CB \\
\midrule
\multirow{2}{*}{DARKFARSEER}
& \quad No missing & 2.10 & 6.85 & -2.10 \\
\cmidrule(lr){2-5}
& \quad Imputation & [4.40, 4.60] & [12.16, 13.42] & [-4.25, -4.12] \\
\midrule

\multirow{2}{*}{IGNNK}
& \quad No missing & 1.20 & 5.35 & -1.85 \\
\cmidrule(lr){2-5}
& \quad Imputation & [2.49, 2.86] & [11.57, 12.20] & [-3.72, -3.42] \\
\midrule

\multirow{2}{*}{INCREASE-SP}
& \quad No missing & 1.35 & 5.80 & -1.90 \\
\cmidrule(lr){2-5}
& \quad Imputation & [2.47, 2.81] & [10.94, 12.39] & [-3.85, -3.40] \\
\midrule

\multirow{2}{*}{KITS}
& \quad No missing & 0.95 & 4.90 & -1.75 \\
\cmidrule(lr){2-5}
& \quad Imputation & [1.74, 2.92] & [10.85, 12.48] & [-4.53, -3.39] \\
\midrule

\multirow{2}{*}{STA-GANN}
& \quad No missing & 1.15 & 5.20 & -1.80 \\
\cmidrule(lr){2-5}
& \quad Imputation & [2.00, 2.67] & [10.85, 11.99] & [-4.42, -3.68] \\
\midrule

\multirow{2}{*}{STGCN}
& \quad No missing & 1.10 & 5.05 & -1.70 \\
\cmidrule(lr){2-5}
& \quad Imputation & [2.07, 2.90] & [10.70, 11.87] & [-4.17, -3.37] \\
\bottomrule
\end{tabular}
\end{table}

\paragraph{Discussion.}
The above results provide three observations.
First, the no-missing reference already shows non-negligible value-dependent bias, indicating that kriging backbones themselves are not fully unbiased across different value regimes.
Second, compared with the no-missing setting, block-wise incomplete observations consistently lead to larger low-value overestimation and high-value underestimation, showing that biased imputed values can further amplify the intrinsic bias of kriging.
Third, under the same backbone, the bias ranges across different imputers are relatively narrow, even though their Stage-1 imputation errors can be different.
This suggests that spatial inference may propagate and reshape value-dependent errors into similar residual patterns at unobserved nodes.

\subsection{Additional Results under Different Missing Settings}
\label{app:additional_results}

\paragraph{Results under Random Missingness}

To examine whether \UniSTOK is restricted to block-missing observations, we additionally evaluate it under random missingness. 
Different from block missingness, random missingness independently removes observations at scattered time steps, which weakens temporal continuity but does not create long consecutive gaps. 
As shown in Table~\ref{tab:random_missing_results}, \UniSTOK still improves different backbones on METR-LA, PEMS-BAY, and NREL-AL. 
This verifies that the proposed reliability-guided regulation is not tailored to a specific missing pattern, but can adapt to diverse observation corruption mechanisms.

\begin{table*}[t]
\centering
\caption{
  Performance comparison under random missingness across backbones and datasets.
  Lower is better.
  \textcolor{basegray}{Gray}: base model results;
  \textbf{Black}: results with our plug-in (\textbf{+Ours}).
  {\color{red}$\downarrow$\!x}: relative improvement (\%) of \textbf{+Ours} over Base.
}
\label{tab:random_missing_results}
\renewcommand{\tabcolsep}{3pt}
\renewcommand{\arraystretch}{1.05}
\resizebox{\textwidth}{!}{%
\begin{tabular}{ll r | ccc | ccc | cc | c}
\toprule
\multicolumn{3}{c|}{\multirow{2}{*}{\scalebox{1.05}{\textbf{Models}}}}
  & \multicolumn{3}{c|}{\textbf{METR-LA}}
  & \multicolumn{3}{c|}{\textbf{PEMS-BAY}}
  & \multicolumn{2}{c|}{\textbf{NREL-AL}}
  & \multirow{2}{*}{\shortstack{\textbf{Avg.}\\\textbf{Improv.(\%)}}} \\
\cmidrule(lr){4-6}\cmidrule(lr){7-9}\cmidrule(lr){10-11}
\multicolumn{3}{c|}{}
  & \textbf{MAE} & \textbf{RMSE} & \textbf{MAPE}
  & \textbf{MAE} & \textbf{RMSE} & \textbf{MAPE}
  & \textbf{MAE} & \textbf{RMSE}
  & \\
\midrule

\multirow{2}{*}{\shortstack[l]{\textbf{DCRNN}\\{\scriptsize ICLR'18}}}
  & \multirow{2}{*}{}
  & \textcolor{basegray}{Base}
  & \basenum{6.03} & \basenum{9.51} & \basenum{17.09}
  & \basenum{3.58} & \basenum{6.64} & \basenum{8.75}
  & \basenum{1.98} & \basenum{3.87}
  & \textcolor{basegray}{--} \\
  & & \textbf{+Ours}
  & \textbf{5.82}\imp{3.5} & \textbf{9.44}\imp{0.7} & \textbf{16.21}\imp{5.1}
  & \textbf{3.41}\imp{4.7} & \textbf{6.57}\imp{1.1} & \textbf{8.35}\imp{4.6}
  & \textbf{1.76}\imp{11.1} & \textbf{3.25}\imp{16.0}
  & \textbf{\color{red}5.9} \\
\midrule

\multirow{2}{*}{\shortstack[l]{\textbf{STGCN}\\{\scriptsize IJCAI'18}}}
  & \multirow{2}{*}{}
  & \textcolor{basegray}{Base}
  & \basenum{6.04} & \basenum{9.65} & \basenum{17.32}
  & \basenum{3.59} & \basenum{6.70} & \basenum{8.83}
  & \basenum{2.01} & \basenum{3.93}
  & \textcolor{basegray}{--} \\
  & & \textbf{+Ours}
  & \textbf{5.72}\imp{5.3} & \textbf{9.34}\imp{3.2} & \textbf{16.20}\imp{6.5}
  & \textbf{3.34}\imp{7.0} & \textbf{6.46}\imp{3.6} & \textbf{8.22}\imp{6.9}
  & \textbf{1.66}\imp{17.4} & \textbf{3.30}\imp{16.0}
  & \textbf{\color{red}8.2} \\
\midrule

\multirow{2}{*}{\shortstack[l]{\textbf{GMAN}\\{\scriptsize AAAI'20}}}
  & \multirow{2}{*}{}
  & \textcolor{basegray}{Base}
  & \basenum{6.37} & \basenum{11.10} & \basenum{17.62}
  & \basenum{3.92} & \basenum{7.34} & \basenum{8.99}
  & \basenum{1.69} & \basenum{3.13}
  & \textcolor{basegray}{--} \\
  & & \textbf{+Ours}
  & \textbf{5.91}\imp{7.2} & \textbf{9.39}\imp{15.4} & \textbf{16.55}\imp{6.1}
  & \textbf{3.68}\imp{6.1} & \textbf{6.74}\imp{8.2} & \textbf{8.61}\imp{4.2}
  & \textbf{1.42}\imp{16.0} & \textbf{2.97}\imp{5.1}
  & \textbf{\color{red}8.5} \\
\midrule

\multirow{2}{*}{\shortstack[l]{\textbf{IGNNK}\\{\scriptsize AAAI'21}}}
  & \multirow{2}{*}{}
  & \textcolor{basegray}{Base}
  & \basenum{5.94} & \basenum{9.63} & \basenum{16.88}
  & \basenum{3.63} & \basenum{6.60} & \basenum{8.72}
  & \basenum{2.04} & \basenum{4.15}
  & \textcolor{basegray}{--} \\
  & & \textbf{+Ours}
  & \textbf{5.36}\imp{9.8} & \textbf{8.74}\imp{9.2} & \textbf{15.67}\imp{7.2}
  & \textbf{3.41}\imp{6.1} & \textbf{6.54}\imp{0.9} & \textbf{8.19}\imp{6.1}
  & \textbf{1.76}\imp{13.7} & \textbf{3.29}\imp{20.7}
  & \textbf{\color{red}9.2} \\
\midrule

\multirow{2}{*}{\shortstack[l]{\textbf{SATCN}\\{\scriptsize Arxiv'21}}}
  & \multirow{2}{*}{}
  & \textcolor{basegray}{Base}
  & \basenum{6.01} & \basenum{9.53} & \basenum{16.93}
  & \basenum{3.57} & \basenum{6.73} & \basenum{8.74}
  & \basenum{1.60} & \basenum{3.02}
  & \textcolor{basegray}{--} \\
  & & \textbf{+Ours}
  & \textbf{5.72}\imp{4.8} & \textbf{9.26}\imp{2.8} & \textbf{16.34}\imp{3.5}
  & \textbf{3.52}\imp{1.4} & \textbf{6.64}\imp{1.3} & \textbf{8.62}\imp{1.4}
  & \textbf{1.34}\imp{16.2} & \textbf{2.83}\imp{6.3}
  & \textbf{\color{red}4.7} \\
\midrule

\multirow{2}{*}{\shortstack[l]{\textbf{INCREASE}\\{\scriptsize WWW'23}}}
  & \multirow{2}{*}{}
  & \textcolor{basegray}{Base}
  & \basenum{5.89} & \basenum{9.54} & \basenum{16.85}
  & \basenum{3.69} & \basenum{6.77} & \basenum{8.83}
  & \basenum{1.94} & \basenum{3.85}
  & \textcolor{basegray}{--} \\
  & & \textbf{+Ours}
  & \textbf{5.31}\imp{9.8} & \textbf{8.87}\imp{7.0} & \textbf{16.13}\imp{4.3}
  & \textbf{3.42}\imp{7.3} & \textbf{6.65}\imp{1.8} & \textbf{8.47}\imp{4.1}
  & \textbf{1.86}\imp{4.1} & \textbf{3.55}\imp{7.8}
  & \textbf{\color{red}5.8} \\
\midrule

\multirow{2}{*}{\shortstack[l]{\textbf{KITS}\\{\scriptsize AAAI'25}}}
  & \multirow{2}{*}{}
  & \textcolor{basegray}{Base}
  & \basenum{6.29} & \basenum{10.35} & \basenum{18.02}
  & \basenum{3.98} & \basenum{7.34} & \basenum{9.40}
  & \basenum{2.59} & \basenum{4.77}
  & \textcolor{basegray}{--} \\
  & & \textbf{+Ours}
  & \textbf{5.76}\imp{8.4} & \textbf{9.48}\imp{8.4} & \textbf{16.56}\imp{8.1}
  & \textbf{3.93}\imp{1.3} & \textbf{6.87}\imp{6.4} & \textbf{9.25}\imp{1.6}
  & \textbf{2.31}\imp{10.8} & \textbf{4.32}\imp{9.4}
  & \textbf{\color{red}6.8} \\
\midrule

\multirow{2}{*}{\shortstack[l]{\textbf{STA-GANN}\\{\scriptsize CIKM'25}}}
  & \multirow{2}{*}{}
  & \textcolor{basegray}{Base}
  & \basenum{5.98} & \basenum{9.60} & \basenum{16.55}
  & \basenum{3.61} & \basenum{6.68} & \basenum{8.79}
  & \basenum{1.48} & \basenum{2.84}
  & \textcolor{basegray}{--} \\
  & & \textbf{+Ours}
  & \textbf{5.54}\imp{7.4} & \textbf{9.01}\imp{6.1} & \textbf{15.96}\imp{3.6}
  & \textbf{3.43}\imp{5.0} & \textbf{6.36}\imp{4.8} & \textbf{8.34}\imp{5.1}
  & \textbf{1.37}\imp{7.4} & \textbf{2.69}\imp{5.3}
  & \textbf{\color{red}5.6} \\
\midrule

\multirow{2}{*}{\shortstack[l]{\textbf{DarkFarseer}\\{\scriptsize AAAI'26}}}
  & \multirow{2}{*}{}
  & \textcolor{basegray}{Base}
  & \basenum{6.17} & \basenum{9.88} & \basenum{18.24}
  & \basenum{4.25} & \basenum{8.07} & \basenum{11.18}
  & \basenum{2.05} & \basenum{3.96}
  & \textcolor{basegray}{--} \\
  & & \textbf{+Ours}
  & \textbf{5.51}\imp{10.7} & \textbf{9.44}\imp{4.5} & \textbf{16.39}\imp{10.1}
  & \textbf{4.04}\imp{4.9} & \textbf{7.59}\imp{5.9} & \textbf{10.21}\imp{8.7}
  & \textbf{1.87}\imp{8.8} & \textbf{3.34}\imp{15.7}
  & \textbf{\color{red}8.7} \\

\midrule
\rowcolor{blue!8}
\multicolumn{3}{c|}{\scalebox{1.02}{\textbf{Avg.\ Improv.\ (\%)}}}
  & \textbf{\color{red}7.4} & \textbf{\color{red}6.4} & \textbf{\color{red}6.0}
  & \textbf{\color{red}4.9} & \textbf{\color{red}3.8} & \textbf{\color{red}4.7}
  & \textbf{\color{red}11.7} & \textbf{\color{red}11.4}
  & \textbf{\color{red}7.0} \\
\bottomrule
\end{tabular}%
}
\end{table*}

\paragraph{Results without Missing Observations}

We also evaluate \UniSTOK in the no-missing setting, where all observations of available sensors are preserved. 
This setting removes the direct influence of missing-value imputation and thus isolates whether \UniSTOK can still benefit inductive kriging itself. 
As reported in Table~\ref{tab:no_missing_results}, \UniSTOK continues to improve the base backbones in most cases. 
This observation suggests that \UniSTOK not only mitigates missing-value corruption, but also calibrates the intrinsic residual bias caused by transferring patterns from observed nodes to unseen nodes.

\begin{table*}[t]
\centering
\caption{
  Performance comparison under the no-missing setting across backbones and datasets.
  Lower is better.
  \textcolor{basegray}{Gray}: base model results;
  \textbf{Black}: results with our plug-in (\textbf{+Ours}).
  {\color{red}$\downarrow$\!x}: relative improvement (\%) of \textbf{+Ours} over Base.
}
\label{tab:no_missing_results}
\renewcommand{\tabcolsep}{3pt}
\renewcommand{\arraystretch}{1.05}
\resizebox{\textwidth}{!}{%
\begin{tabular}{ll r | ccc | ccc | cc | c}
\toprule
\multicolumn{3}{c|}{\multirow{2}{*}{\scalebox{1.05}{\textbf{Models}}}}
  & \multicolumn{3}{c|}{\textbf{METR-LA}}
  & \multicolumn{3}{c|}{\textbf{PEMS-BAY}}
  & \multicolumn{2}{c|}{\textbf{NREL-AL}}
  & \multirow{2}{*}{\shortstack{\textbf{Avg.}\\\textbf{Improv.(\%)}}} \\
\cmidrule(lr){4-6}\cmidrule(lr){7-9}\cmidrule(lr){10-11}
\multicolumn{3}{c|}{}
  & \textbf{MAE} & \textbf{RMSE} & \textbf{MAPE}
  & \textbf{MAE} & \textbf{RMSE} & \textbf{MAPE}
  & \textbf{MAE} & \textbf{RMSE}
  & \\
\midrule

\multirow{2}{*}{\shortstack[l]{\textbf{DCRNN}\\{\scriptsize ICLR'18}}}
  & \multirow{2}{*}{}
  & \textcolor{basegray}{Base}
  & \basenum{6.01} & \basenum{9.47} & \basenum{17.02}
  & \basenum{3.54} & \basenum{6.61} & \basenum{8.67}
  & \basenum{1.90} & \basenum{3.72}
  & \textcolor{basegray}{--} \\
  & & \textbf{+Ours}
  & \textbf{5.75}\imp{4.3} & \textbf{9.42}\imp{0.5} & \textbf{16.13}\imp{5.2}
  & \textbf{3.38}\imp{4.5} & \textbf{6.53}\imp{1.2} & \textbf{8.36}\imp{3.6}
  & \textbf{1.74}\imp{8.4} & \textbf{3.24}\imp{12.9}
  & \textbf{\color{red}5.1} \\
\midrule

\multirow{2}{*}{\shortstack[l]{\textbf{STGCN}\\{\scriptsize IJCAI'18}}}
  & \multirow{2}{*}{}
  & \textcolor{basegray}{Base}
  & \basenum{6.07} & \basenum{9.63} & \basenum{17.29}
  & \basenum{3.59} & \basenum{6.71} & \basenum{8.86}
  & \basenum{1.93} & \basenum{3.88}
  & \textcolor{basegray}{--} \\
  & & \textbf{+Ours}
  & \textbf{5.69}\imp{6.3} & \textbf{9.26}\imp{3.8} & \textbf{16.11}\imp{6.8}
  & \textbf{3.32}\imp{7.5} & \textbf{6.44}\imp{4.0} & \textbf{8.13}\imp{8.2}
  & \textbf{1.62}\imp{16.1} & \textbf{3.29}\imp{15.2}
  & \textbf{\color{red}8.5} \\
\midrule

\multirow{2}{*}{\shortstack[l]{\textbf{GMAN}\\{\scriptsize AAAI'20}}}
  & \multirow{2}{*}{}
  & \textcolor{basegray}{Base}
  & \basenum{6.32} & \basenum{10.74} & \basenum{17.55}
  & \basenum{3.94} & \basenum{7.40} & \basenum{8.95}
  & \basenum{1.65} & \basenum{3.08}
  & \textcolor{basegray}{--} \\
  & & \textbf{+Ours}
  & \textbf{5.90}\imp{6.6} & \textbf{9.35}\imp{12.9} & \textbf{16.49}\imp{6.0}
  & \textbf{3.67}\imp{6.9} & \textbf{6.77}\imp{8.5} & \textbf{8.58}\imp{4.1}
  & \textbf{1.40}\imp{15.2} & \textbf{2.95}\imp{4.2}
  & \textbf{\color{red}8.1} \\
\midrule

\multirow{2}{*}{\shortstack[l]{\textbf{IGNNK}\\{\scriptsize AAAI'21}}}
  & \multirow{2}{*}{}
  & \textcolor{basegray}{Base}
  & \basenum{5.92} & \basenum{9.60} & \basenum{16.76}
  & \basenum{3.59} & \basenum{6.53} & \basenum{8.73}
  & \basenum{1.99} & \basenum{4.06}
  & \textcolor{basegray}{--} \\
  & & \textbf{+Ours}
  & \textbf{5.33}\imp{10.0} & \textbf{8.64}\imp{10.0} & \textbf{15.61}\imp{6.9}
  & \textbf{3.40}\imp{5.3} & \textbf{6.44}\imp{1.4} & \textbf{8.07}\imp{7.6}
  & \textbf{1.72}\imp{13.6} & \textbf{3.13}\imp{22.9}
  & \textbf{\color{red}9.7} \\
\midrule

\multirow{2}{*}{\shortstack[l]{\textbf{SATCN}\\{\scriptsize Arxiv'21}}}
  & \multirow{2}{*}{}
  & \textcolor{basegray}{Base}
  & \basenum{6.03} & \basenum{9.56} & \basenum{16.98}
  & \basenum{3.52} & \basenum{6.65} & \basenum{8.70}
  & \basenum{1.54} & \basenum{2.97}
  & \textcolor{basegray}{--} \\
  & & \textbf{+Ours}
  & \textbf{5.71}\imp{5.3} & \textbf{9.15}\imp{4.3} & \textbf{16.22}\imp{4.5}
  & \textbf{3.47}\imp{1.4} & \textbf{6.61}\imp{0.6} & \textbf{8.59}\imp{1.3}
  & \textbf{1.31}\imp{14.9} & \textbf{2.66}\imp{10.4}
  & \textbf{\color{red}5.3} \\
\midrule

\multirow{2}{*}{\shortstack[l]{\textbf{INCREASE}\\{\scriptsize WWW'23}}}
  & \multirow{2}{*}{}
  & \textcolor{basegray}{Base}
  & \basenum{5.87} & \basenum{9.52} & \basenum{16.75}
  & \basenum{3.67} & \basenum{6.74} & \basenum{8.76}
  & \basenum{1.88} & \basenum{3.75}
  & \textcolor{basegray}{--} \\
  & & \textbf{+Ours}
  & \textbf{5.29}\imp{9.9} & \textbf{8.74}\imp{8.2} & \textbf{15.90}\imp{5.1}
  & \textbf{3.38}\imp{7.9} & \textbf{6.54}\imp{3.0} & \textbf{8.29}\imp{5.4}
  & \textbf{1.83}\imp{2.7} & \textbf{3.43}\imp{8.5}
  & \textbf{\color{red}6.3} \\
\midrule

\multirow{2}{*}{\shortstack[l]{\textbf{KITS}\\{\scriptsize AAAI'25}}}
  & \multirow{2}{*}{}
  & \textcolor{basegray}{Base}
  & \basenum{6.23} & \basenum{10.11} & \basenum{17.49}
  & \basenum{3.96} & \basenum{7.36} & \basenum{9.43}
  & \basenum{2.45} & \basenum{4.41}
  & \textcolor{basegray}{--} \\
  & & \textbf{+Ours}
  & \textbf{5.71}\imp{8.3} & \textbf{9.42}\imp{6.8} & \textbf{16.38}\imp{6.3}
  & \textbf{3.92}\imp{1.0} & \textbf{6.91}\imp{6.1} & \textbf{9.18}\imp{2.7}
  & \textbf{2.14}\imp{12.7} & \textbf{4.16}\imp{5.7}
  & \textbf{\color{red}6.2} \\
\midrule

\multirow{2}{*}{\shortstack[l]{\textbf{STA-GANN}\\{\scriptsize CIKM'25}}}
  & \multirow{2}{*}{}
  & \textcolor{basegray}{Base}
  & \basenum{5.93} & \basenum{9.51} & \basenum{16.40}
  & \basenum{3.59} & \basenum{6.62} & \basenum{8.71}
  & \basenum{1.45} & \basenum{2.73}
  & \textcolor{basegray}{--} \\
  & & \textbf{+Ours}
  & \textbf{5.47}\imp{7.8} & \textbf{8.88}\imp{6.6} & \textbf{15.92}\imp{2.9}
  & \textbf{3.43}\imp{4.5} & \textbf{6.35}\imp{4.1} & \textbf{8.35}\imp{4.1}
  & \textbf{1.32}\imp{9.0} & \textbf{2.49}\imp{8.8}
  & \textbf{\color{red}6.0} \\
\midrule

\multirow{2}{*}{\shortstack[l]{\textbf{DarkFarseer}\\{\scriptsize AAAI'26}}}
  & \multirow{2}{*}{}
  & \textcolor{basegray}{Base}
  & \basenum{6.15} & \basenum{9.85} & \basenum{18.17}
  & \basenum{4.20} & \basenum{7.97} & \basenum{10.72}
  & \basenum{1.98} & \basenum{3.87}
  & \textcolor{basegray}{--} \\
  & & \textbf{+Ours}
  & \textbf{5.49}\imp{10.7} & \textbf{9.36}\imp{5.0} & \textbf{16.28}\imp{10.4}
  & \textbf{4.01}\imp{4.5} & \textbf{7.51}\imp{5.8} & \textbf{10.11}\imp{5.7}
  & \textbf{1.87}\imp{5.6} & \textbf{3.32}\imp{14.2}
  & \textbf{\color{red}7.7} \\

\midrule
\rowcolor{blue!8}
\multicolumn{3}{c|}{\scalebox{1.02}{\textbf{Avg.\ Improv.\ (\%)}}}
  & \textbf{\color{red}7.7} & \textbf{\color{red}6.5} & \textbf{\color{red}6.0}
  & \textbf{\color{red}4.8} & \textbf{\color{red}3.9} & \textbf{\color{red}4.7}
  & \textbf{\color{red}10.9} & \textbf{\color{red}11.4}
  & \textbf{\color{red}7.0} \\
\bottomrule
\end{tabular}%
}
\end{table*}

\paragraph{Results on Real-world Block-missing Data}

We further evaluate \UniSTOK on NZ-Highway, a real-world traffic volume dataset with naturally occurring block-wise missing observations.
Unlike synthetic block missingness, NZ-Highway contains original missing records from highway monitoring systems, making it a practical test case for incomplete spatio-temporal kriging.
As shown in Table~\ref{tab:nz_highway_results}, \UniSTOK improves most backbones on both MAE and RMSE, although the gains are smaller than those on standard benchmark datasets.
This suggests that \UniSTOK remains effective under real-world block-wise missing observations, while the high original missing rate and dataset-specific traffic dynamics make the task more challenging.

\begin{table}[t]
\centering
\caption{
Performance comparison on NZ-Highway with naturally occurring block-wise missing observations.
Lower is better.
  \textcolor{basegray}{Gray}: base model results;
  \textbf{Black}: results with our plug-in (\textbf{+Ours}).
  {\color{red}$\downarrow$\!x}: relative improvement (\%) of \textbf{+Ours} over Base.
}
\label{tab:nz_highway_results}
\vspace{3pt}
\tiny
\setlength{\tabcolsep}{3pt}
\renewcommand{\arraystretch}{1.02}
\resizebox{0.92\linewidth}{!}{%
\begin{tabular}{ll ccccccccc}
\toprule
\textbf{Metric} & \textbf{Setting}
& \textbf{DCRNN}
& \textbf{STGCN}
& \textbf{GMAN}
& \textbf{IGNNK}
& \textbf{SATCN}
& \textbf{INCREASE}
& \textbf{KITS}
& \textbf{STA-GANN}
& \textbf{DarkFarseer} \\
\midrule
\multirow{2}{*}{\textbf{MAE}}
& \textcolor{basegray}{Base}
& \basenum{74.23} & \basenum{78.52} & \basenum{97.37} & \basenum{78.55} & \basenum{79.36} & \basenum{76.28} & \basenum{85.09} & \basenum{76.07} & \basenum{99.23} \\
& \textbf{+Ours}
& \textbf{72.60}\imp{2.2} & \textbf{76.99}\imp{1.9} & \textbf{93.12}\imp{4.4} & \textbf{77.60}\imp{1.2} & \textbf{78.27}\imp{1.4} & \textbf{74.85}\imp{1.9} & \textbf{78.16}\imp{8.1} & \textbf{75.85}\imp{0.3} & \textbf{97.86}\imp{1.4} \\
\midrule
\multirow{2}{*}{\textbf{RMSE}}
& \textcolor{basegray}{Base}
& \basenum{112.59} & \basenum{114.85} & \basenum{143.81} & \basenum{115.29} & \basenum{115.38} & \basenum{115.37} & \basenum{130.74} & \basenum{113.17} & \basenum{146.18} \\
& \textbf{+Ours}
& \textbf{111.46}\imp{1.0} & \textbf{113.40}\imp{1.3} & \textbf{138.04}\imp{4.0} & \textbf{114.50}\imp{0.7} & \textbf{113.96}\imp{1.2} & \textbf{110.71}\imp{4.0} & \textbf{117.22}\imp{10.3} & \textbf{110.63}\imp{2.2} & \textbf{143.67}\imp{1.7} \\
\bottomrule
\end{tabular}%
}
\end{table}

\subsection{Additional Ablation Results}
\label{app:additional_ablation}

For module-level ablation analysis, we report the average relative degradation over the full model to make results comparable across datasets with different value scales.
For each dataset $\mathcal{D}_i$, the degradation of metric $m$ is computed as
\begin{equation}
D_{m}^{i}
=
\frac{
m^{i}_{\mathrm{variant}}
-
m^{i}_{\mathrm{Full}}
}{
m^{i}_{\mathrm{Full}}
}
\times 100\% .
\end{equation}
The final degradation score is averaged over datasets:
\begin{equation}
\overline{D}_{m}
=
\frac{1}{|\mathcal{D}_{m}|}
\sum_{i=1}^{|\mathcal{D}_{m}|}
D_{m}^{i},
\end{equation}
where $\mathcal{D}_{m}$ denotes the datasets with available results for metric $m$.
For MAE and RMSE, $\mathcal{D}_{m}$ contains all three datasets; for MAPE, it contains the datasets where MAPE is reported.

Figures~\ref{fig:ablation_appendix_all} reports the module-level ablation results under MAE, RMSE, and MAPE. The variants are organized into three groups.
The RSR variants remove temporal reliability, spatial reliability, or residual compensation from the reliability-based signal regulation module.
The adaptive fusion variants replace the full gated fusion with single-branch or non-gated alternatives.
The RBC variants remove context amplitude, bin-balanced learning, or soft-bin assignment from residual bias calibration.

\begin{figure*}[t]
    \centering

    \vspace{0.2em}
    {\small \textbf{MAE}}
    \vspace{0.2em}
    
    \includegraphics[width=\textwidth]{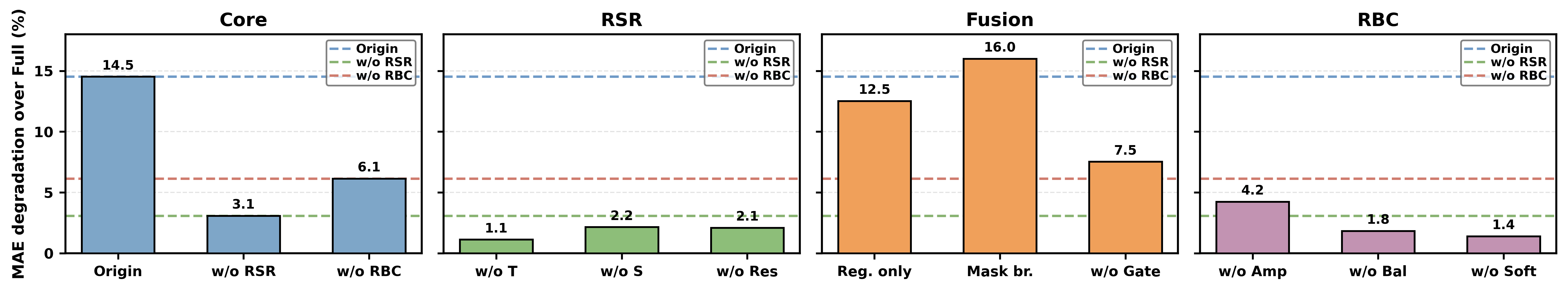}

    \vspace{0.6em}
    {\small \textbf{RMSE}}
    \vspace{0.2em}
    
    \includegraphics[width=\textwidth]{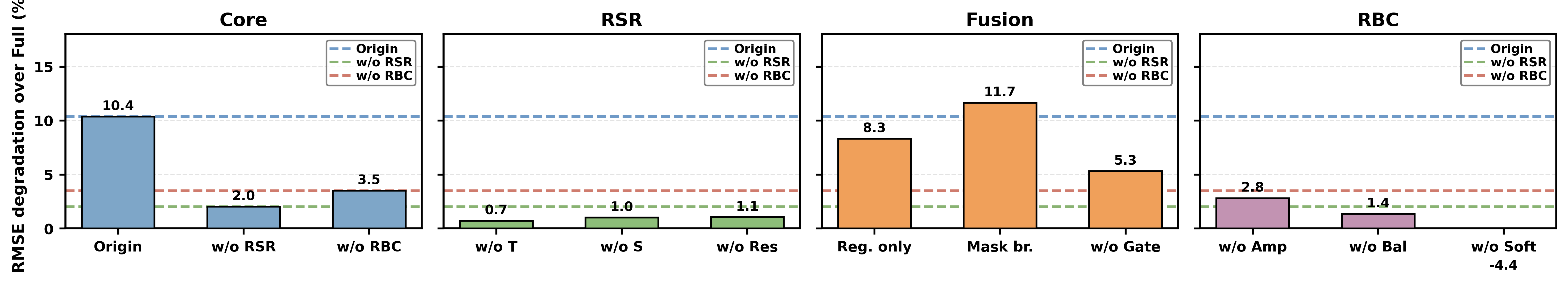}

    \vspace{0.6em}
    {\small \textbf{MAPE}}
    \vspace{0.2em}
    
    \includegraphics[width=\textwidth]{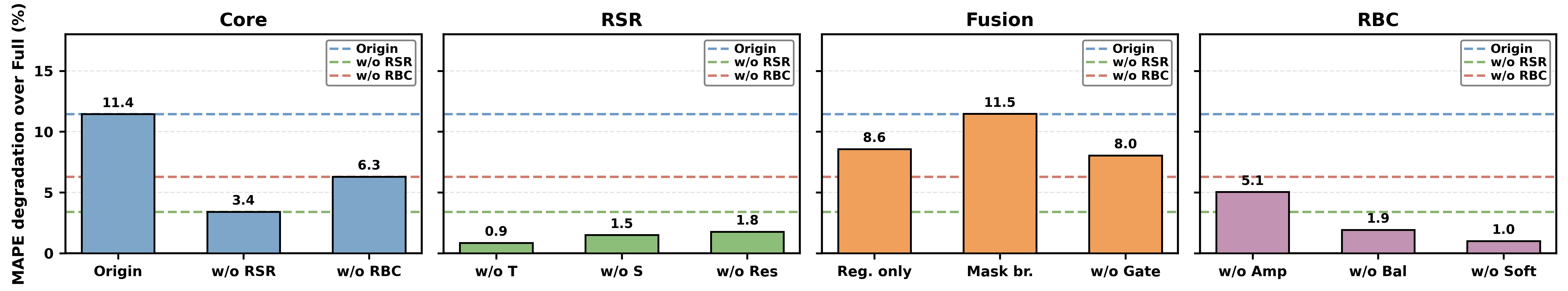}
    
    \caption{Module-level ablation results based on the average relative degradation over the full model. The RSR variants include \textit{w/o temporal reliability}, \textit{w/o spatial reliability}, and \textit{w/o residual compensation}. The fusion variants include \textit{regulated-only}, \textit{M as second branch}, and \textit{w/o gate}. The RBC variants include \textit{w/o context amplitude}, \textit{w/o bin-balanced loss}, and \textit{hard-bin RBC}.}

    \label{fig:ablation_appendix_all}
\end{figure*}

The results provide consistent evidence for the effectiveness of each internal design.
For RSR, removing temporal reliability, spatial reliability, or residual compensation increases degradation, showing that reliable input regulation requires both reliability estimation and learnable compensation.
For adaptive fusion, single-branch and non-gated alternatives are consistently worse than the full gated fusion, indicating that the model needs to adaptively balance the original and regulated branches.
For RBC, removing context amplitude, bin-balanced learning, or soft-bin assignment also degrades performance, confirming that residual calibration should be both value-aware and distribution-aware.
Overall, these fine-grained results further support the complementary roles of reliability-based signal regulation, adaptive branch fusion, and residual bias calibration.

\subsection{Additional Hyperparameter Sensitivity Results}
\label{app:hyperparameter_sensitivity}

Fig.~\ref{fig:hyperparameter_appendix} reports additional hyperparameter sensitivity results on PEMS-BAY and NREL.
The analysis follows the same setting as the main text, using IGNNK as the backbone and varying one hyperparameter at a time.
On PEMS-BAY, the performance changes only mildly across different values of $K$, $\eta$, and $\alpha$, indicating that the selected setting is not a fragile optimum.
A moderate residual bound gives slightly better RMSE, while overly large correction does not provide consistent gains, suggesting that RBC should be applied in a bounded manner.
For the regulation strength $\alpha$, moderate values also perform competitively, which supports the use of conservative RSR modulation.

On NREL, the curves show slightly larger fluctuations than those on PEMS-BAY, especially for the residual-bin setting.
This is likely because solar generation contains sharper value changes and a larger proportion of low-value regimes, making residual prototypes more sensitive to the bin partition.
Nevertheless, the overall performance remains within a narrow range under most settings.
The results further indicate that \UniSTOK does not rely on a single highly tuned hyperparameter configuration, while moderate reliability regulation and bounded residual calibration are generally more stable choices.

\begin{figure*}[t]
    \centering
    \begin{subfigure}{\textwidth}
        \centering
        \includegraphics[width=\textwidth]{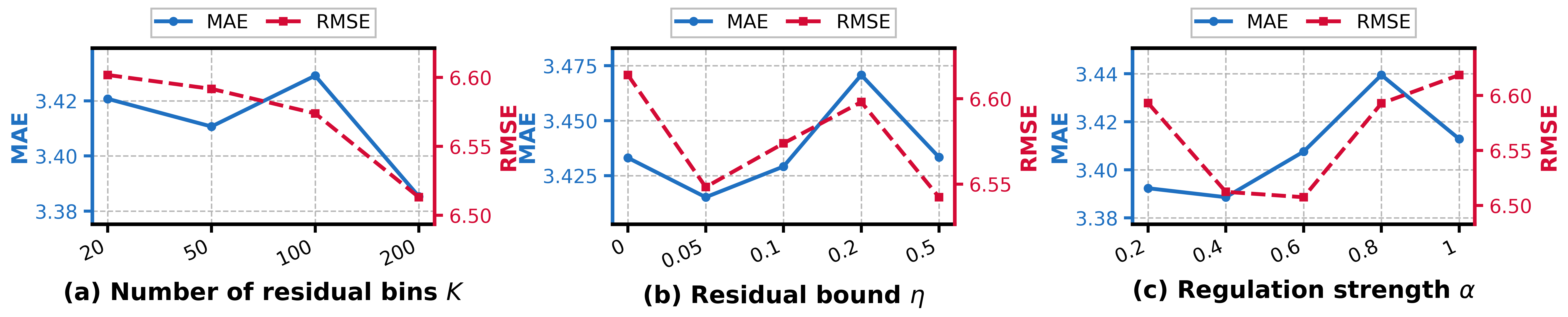}
        \caption{PEMS-BAY}
        \label{fig:hyperparameter_pems}
    \end{subfigure}
    
    \vspace{0.6em}
    
    \begin{subfigure}{\textwidth}
        \centering
        \includegraphics[width=\textwidth]{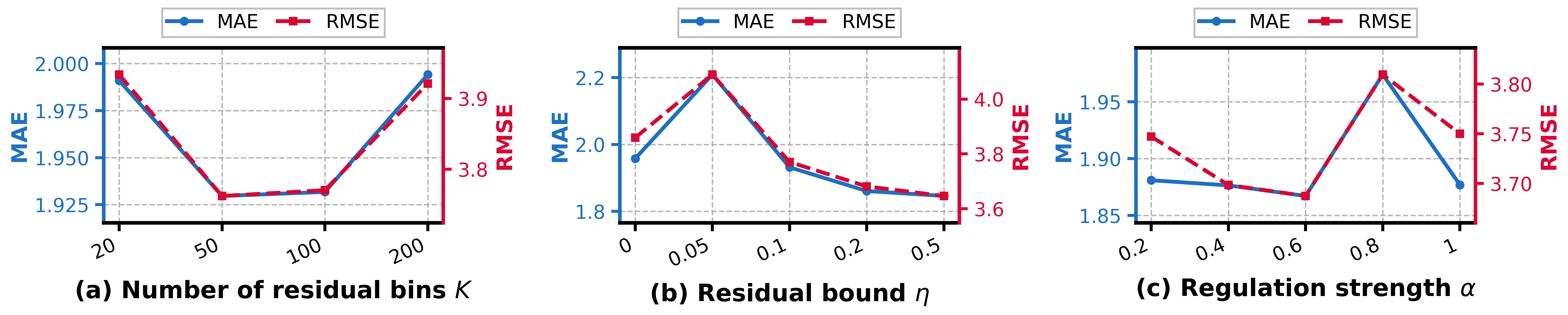}
        \caption{NREL}
        \label{fig:hyperparameter_nrel}
    \end{subfigure}
    \caption{
    Additional hyperparameter sensitivity results of \UniSTOK with IGNNK as the backbone.
    We analyze the number of residual bins $K$ in RBC, the residual correction bound $\eta$, and the reliability regulation strength $\alpha$ in RSR.
    Lower MAE and RMSE indicate better performance.
    }
    \label{fig:hyperparameter_appendix}
\end{figure*}

\subsection{Visualization of RSR and Adaptive Fusion}
\label{app:rsr_fusion_visualization}

Fig.~\ref{fig:rsr_fusion_appendix} provides additional visualizations of the RSR module and the adaptive fusion mechanism on four representative samples.
Each column corresponds to one sample, while the three rows show the observation mask $M$, the RSR-induced input difference $\widetilde{X}-X$, and the adaptive fusion gate $G$, respectively.

The mask maps show heterogeneous missing structures across samples, including scattered missing entries and more continuous sparse regions.
The maps of $\widetilde{X}-X$ contain both positive and negative adjustments, indicating that RSR performs bidirectional source-side regulation rather than uniformly amplifying the input.
In samples with more sparse or structured missing regions, stronger negative adjustments can be observed, suggesting suppression of potentially unreliable source information.
In other regions, localized positive adjustments indicate that RSR can also enhance useful source evidence when appropriate.
The gate maps further show structured and localized activations, indicating that adaptive fusion does not uniformly rely on the RSR-enhanced branch.
Instead, it selectively increases the contribution of the regulated branch in specific node-time regions.

Overall, these visualizations support the intended behavior of the proposed design: RSR regulates the source-side input according to reliability, and adaptive fusion determines when the RSR-enhanced branch should contribute to the final kriging prediction.

\begin{figure*}[t]
    \centering
    \includegraphics[width=\textwidth]{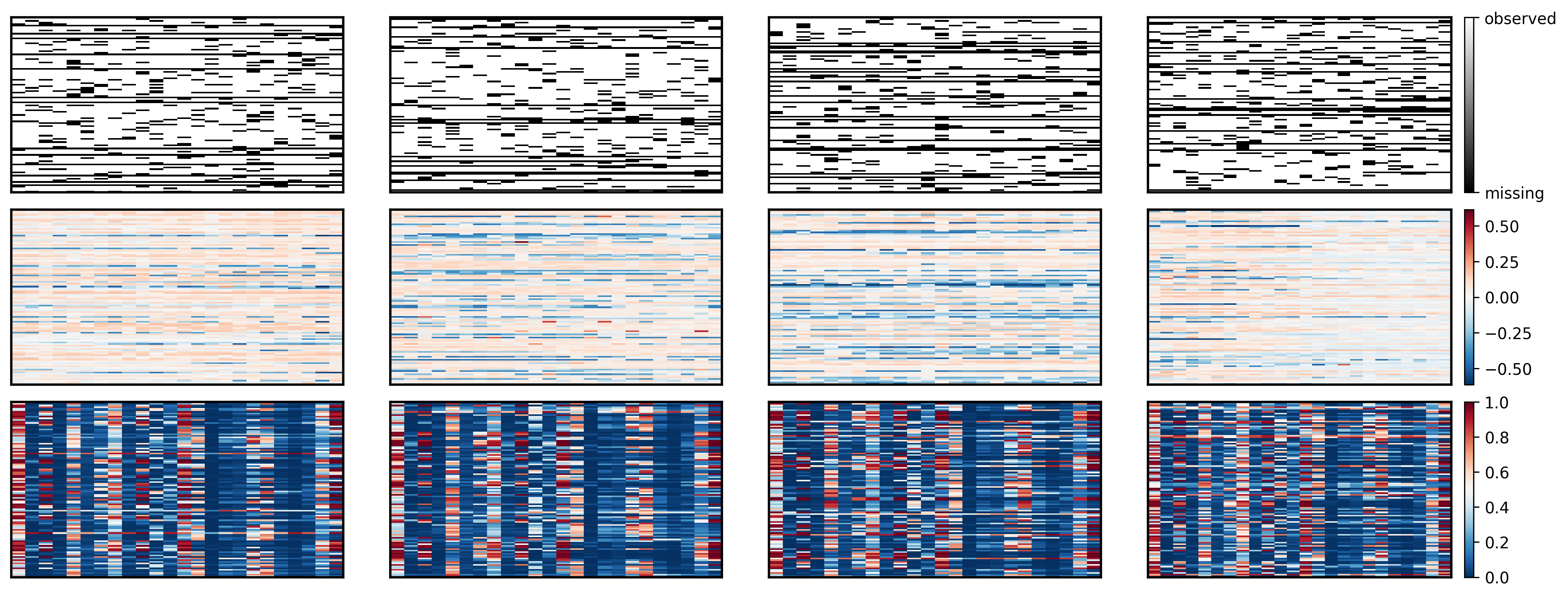}
    \caption{
    Additional visualization of RSR and adaptive fusion on four representative samples.
    Each column corresponds to one sample.
    From top to bottom, the three rows show the observation mask $M$, the RSR-induced input difference $\widetilde{X}-X$, and the adaptive fusion gate $G$.
    Larger $G$ indicates stronger reliance on the RSR-enhanced branch.
    }
    \label{fig:rsr_fusion_appendix}
\end{figure*}

\subsection{Visualization of Residual Bias Calibration}
\label{app:rbc_visualization}

Figure~\ref{fig:rbc_distribution_case} provides a distribution-level case study of RBC.
Before calibration, the predictions are overly concentrated in a narrow high-value range, indicating clear value-dependent residual bias and reduced distributional diversity.
After RBC, the prediction distribution becomes closer to the ground-truth distribution, with the excessive concentration alleviated.
This shows that RBC corrects residual bias instead of applying a uniform global offset.
The learned amplitude distribution further explains this behavior.
Most amplitudes are close to zero, while only a small fraction of entries receive stronger correction.
This indicates that RBC performs selective and conservative calibration: it preserves reliable predictions and mainly adjusts entries with stronger systematic bias.

\begin{figure}[h]
    \centering
    \includegraphics[width=\textwidth]{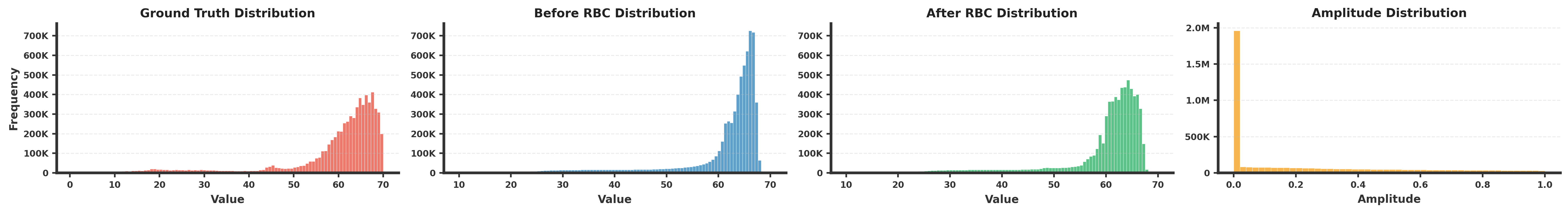}
    \caption{
    From left to right: ground-truth values, predictions before RBC, predictions after RBC, and learned correction amplitudes.
    }
    \label{fig:rbc_distribution_case}
\end{figure}

\end{document}